\newcommand{\poly}{\mathrm{poly}}
\title{Learning-Augmented Online Bipartite Matching\texorpdfstring{\\}{ }in the Random Arrival Order Model}
\titlerunning{Learning-Augmented Online Bipartite Matching}
\author{Kunanon Burathep}{Department of Computer Science, Durham University, UK}{kunanon.burathep@durham.ac.uk}{https://orcid.org/0009-0000-5262-9300}{}
\author{Thomas Erlebach}{Department of Computer Science, Durham University, UK}{thomas.erlebach@durham.ac.uk}{https://orcid.org/0000-0002-4470-5868}{}
\author{William K. Moses Jr.}{Department of Computer Science, Durham University, UK}
{william.k.moses-jr@durham.ac.uk}{https://orcid.org/0000-0002-4533-7593}{}
\authorrunning{Kunanon Burathep, Thomas Erlebach, William K. Moses Jr.}
\begin{document}
\maketitle

\begin{abstract}
We study the online unweighted bipartite matching problem in the random 
arrival order model, with $n$ offline and $n$ online vertices, in the
learning-augmented setting: The algorithm is provided with untrusted predictions of
the types (neighborhoods) of the online vertices.
We build upon the work of Choo et al.~(ICML 2024, pp.~8762--8781) who proposed
an approach that uses a prefix of the arrival sequence as a sample to
determine whether the predictions are close to the true arrival
sequence and then either follows the predictions or uses a known baseline
algorithm that ignores the predictions and is $\beta$-competitive.
Their analysis is limited to the case that the optimal matching
has size~$n$, i.e., every online vertex can be matched.  
We generalize their approach and analysis by removing any assumptions on the size of the optimal matching while only requiring that the size of the predicted matching is at least $\alpha n$ for any constant $0 < \alpha \le 1$. 
Our learning-augmented algorithm achieves $(1-o(1))$-consistency and
$(\beta-o(1))$-robustness.
Additionally, we show that the
competitive ratio degrades smoothly between consistency and robustness
with increasing prediction error.
\keywords{Learning-Augmented Algorithms, Untrusted Predictions, Competitive Analysis, Consistency, Robustness}
\end{abstract}

\section{Introduction}
Online bipartite matching is a fundamental problem in theoretical computer
science with significant applications in areas such as online advertising,
ride-sharing platforms, and resource allocation. This problem involves
a bipartite graph where one side of the graph (the offline vertices) is known in
advance, while the other side (the online vertices) arrives sequentially.
When an online vertex arrives, it must be permanently matched to an offline vertex or
left unmatched, without knowledge of future arrivals. The goal is to maximize
the size of the matching.

Traditional approaches to online bipartite matching consider the worst-case
guarantees in the adversarial arrival model, where
an adversary controls the graph's structure and the arrival sequence of online
vertices. It is easy to see that the greedy algorithm, which matches
each online vertex to an arbitrary unmatched neighbor if one exists,
is $\frac12$-competitive, and it is known that randomization
allows one to obtain an improved competitive ratio of $1-1/e\approx 0.632$~\cite{karp1990optimal}.
A setting that makes it possible to achieve an even better competitive ratio, at least $\beta\approx 0.696$,
is the random arrival order model, where the
sequence of online vertices is assumed to arrive in a uniformly random order~\cite{goel2008online,karande2011online,mahdian2011online}.

Recently, learning-augmented algorithms have emerged as a means to enhance online
algorithms by incorporating (untrusted) predictions about future inputs. These predictions
are also referred to as advice. Learning-augmented algorithms
aim to achieve near-optimal performance when the predictions are accurate while
maintaining a competitive ratio close to that of online algorithms without predictions
when the predictions are inaccurate. For learning-augmented online bipartite
matching, various types of predictions have been studied, such as
degree information~\cite{aamand2022optimal}, advice derived from reinforcement
learning~\cite{li2023learning}, and predictions of the types of online
vertices~\cite{online-bipartite-matching-imperfect-advice}. In this paper,
we build upon the work of Choo et al.~\cite{online-bipartite-matching-imperfect-advice}
who considered the latter type of predictions in the random arrival order
model. They proposed
an approach that uses a prefix of the arrival sequence as a sample to
determine whether the predictions are close to the true arrival
sequence and then either follows the predictions or uses a known baseline
algorithm that ignores the predictions and is $\beta$-competitive.
Here, $\beta$ is the best-known competitive ratio of an online algorithm
for bipartite matching without predictions in the random arrival
order model, currently $\beta\approx 0.696$~\cite{mahdian2011online}.
The analysis by Choo et al.\ is limited to the case that the optimal matching
has size~$n$, i.e., every online vertex can be matched. 

We generalize their approach and analysis by removing any assumptions on the size of the optimal matching while only requiring that the size of the predicted matching is at least $\alpha n$ for an arbitrary constant $0 < \alpha \le 1$.
Our learning-augmented algorithm achieves $(1-o(1))$-consistency and
$(\beta-o(1))$-robustness.
Additionally, we show that the
competitive ratio degrades smoothly between consistency and robustness
with increasing prediction error: We show that the competitive
ratio is at least $1 - \frac{2 L_1(p,q)}{2 \alpha + L_1(p,q)} - o(1)$,
where $L_1(p,q)$ is the $L_1$-distance between the distribution of vertex
types in the predicted arrival sequence and the true arrival sequence.

Our result addresses the following key limitations of the algorithm and analysis proposed by Choo et al.~\cite{online-bipartite-matching-imperfect-advice}. 
In cases where the predicted matching size is smaller than $\beta n$,
their approach does not achieve $(1-o(1))$-consistency even if the predictions are
perfectly accurate: Their algorithm runs the baseline algorithm
whenever the predicted matching size is smaller than $\beta n$.
Furthermore, their analysis relies on the assumption that the optimal matching size equals~$n$.

Although Choo et al.\ do not explicitly state the assumption that the optimal matching has size $n^* = n$, this assumption underlies several parts of their analysis, including Theorem~4.1 and Lemmas~4.3 and~4.4. For a detailed discussion, see Section~\ref{subsec:comparision-with-Choo et al.}. Consequently, their analysis fails to establish $(1-o(1))$-consistency when the optimal matching size is smaller than~$n$, even if the predicted matching size exceeds $\beta n$. In the more general setting with arbitrary optimal matching size, the analysis in the random arrival order model requires a refined argument: after switching to the baseline algorithm, its performance depends on how many matches are still possible once the sampling phase has passed. Our analysis addresses this using Lemma~\ref{lemma:whp} to bound the expected number of such remaining optimal matches and Corollary~\ref{cor:matching-size-switching-baseline} to bound the algorithm's expected total number of matches.

The paper is structured as follows. Section~\ref{sec:prelim} introduces preliminaries, Section~\ref{sec:related} reviews related work, and Section~\ref{subsec:comparision-with-Choo et al.} compares our result to that by Choo et al.~\cite{online-bipartite-matching-imperfect-advice}.
Section~\ref{sec:algo} presents our algorithm and its analysis: we first outline the algorithm, 
then provide the key components of the analysis in Sections~\ref{subsec:estimated-number-of-matching}--\ref{subsec:succeed-rate}, 
and combine these results to prove our main theorem in Section~\ref{subsec:prove-thm}.
Section~\ref{sec:conc} concludes the paper.

\subsection{Preliminaries}
\label{sec:prelim}
A \emph{matching} in a graph with edge set~$E$ is a set $M\subseteq E$
of edges such that each vertex of the graph has at most one incident edge
in~$M$.
We consider the problem of online unweighted bipartite matching in a graph $G =
(U\cup V, E)$, where $U$ and $V$ are the sets of $n$ offline and $n$ online
vertices, respectively.  The online vertices arrive sequentially. Upon the
arrival of an online vertex, the algorithm must either
match it to one of its available neighbors (i.e., to a neighbor that has
not yet been matched) or leave it unmatched. The objective value is the
number of online vertices that have been matched, also referred to as the
number of matches made by the algorithm.

We focus on the random arrival order model, where the arrival order of the
online vertices $v \in V$ is a uniformly random permutation. The performance
of a (possibly randomized) algorithm is measured 
using the \emph{competitive ratio}, defined as 
\[
    \inf_{G = (U\cup V,E)}  \frac{E_{V\text{'s arrival seq.}} \left[ E_{\text{random choices of alg.}}[\#\text{matches by algorithm}] \right]}{\#\text{matches by optimal solution}}
\]
The size of the optimal matching does not depend on the arrival order.
The size of the matching produced by an algorithm depends on the
order of the arrival sequence and on the random choices made by
the algorithm, hence we take the expectation over both of these.
The competitive ratio is a value $\le 1$, and the closer to $1$
it is, the better.

Learning-augmented algorithms are algorithms that are provided
with (untrusted) predic\-tions, and the goal is to design algorithms that
achieve performance close to the optimal solution when the predictions
are accurate, but still maintain a competitive ratio close to that of
online algorithms without predictions if the predictions are arbitrarily
wrong. These properties are formalized as \emph{consistency} and
\emph{robustness}: An algorithm is \emph{$\gamma$-consistent} if its competitive
ratio is at least $\gamma$ when the predictions are entirely correct,
and it is \emph{$\rho$-robust} if its competitive ratio is at
least $\rho$ no matter how wrong the predictions are.
In addition, one is interested in \emph{smoothness}, i.e., one
aims to show that the competitive ratio of an algorithm degrades
gracefully between consistency and robustness, as a function
of the prediction error.

\begin{figure}[t]
\centering
\begin{subfigure}{0.5\linewidth}
    \includegraphics[width=\linewidth]{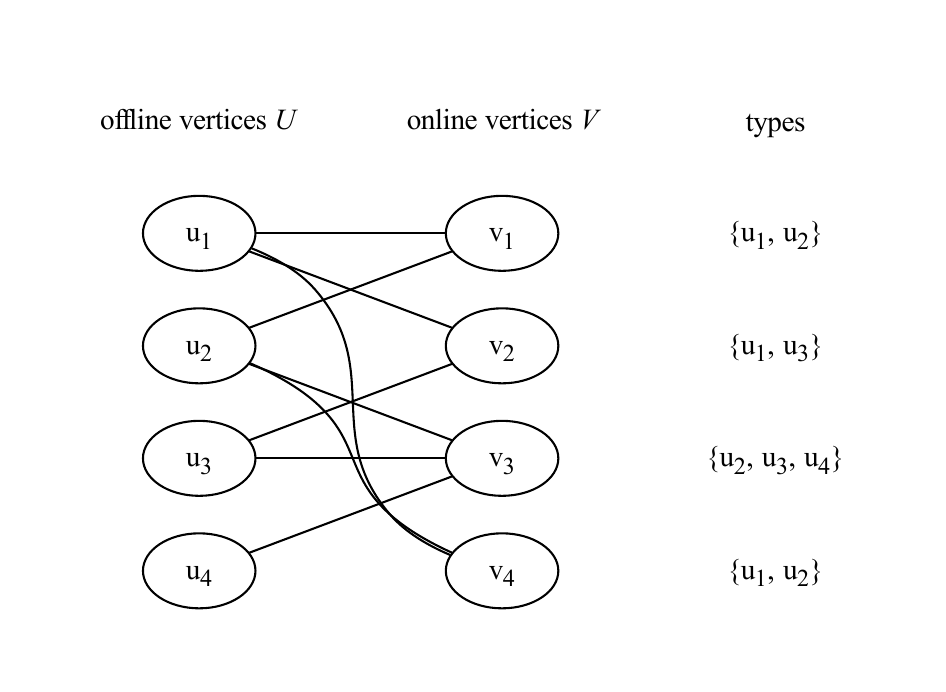} 
    \caption{Bipartite graph and types of online vertices, e.g., $v_1$ and $v_4$ both have type $\{u_1,u_2\}$ as they have edges to $u_1$ and $u_2$}
    \label{fig: example-graph}
\end{subfigure}
\hfill
\begin{subfigure}{0.45\linewidth}
\centering
        \begin{tabular}{c c c}
            \toprule
             & Type $t$ & Count $c^*(t)$\\
            \midrule
                    & $\{u_1, u_2\}$ & 2 \\
            $T^*$   & $\{u_1, u_3\}$ & 1 \\
                    & $\{u_2, u_3, u_4\}$ & 1 \\
            \bottomrule \\ 
            $2^U \setminus T^*$ & \dots & 0
        \end{tabular}
        \caption{Table with number $c^*(t)$  of type $t$ vertices, e.g., $c^*(\{u_1,u_2\}) = 2$}
        \label{table: table-number-of-types}
\end{subfigure}
\caption{Illustration of vertex types}
\label{fig:types}
\end{figure}
The \emph{type} of an online vertex $v \in V$ is the set of
offline vertices in $U$ that are adjacent to~$v$, i.e., the set
$\{u \in U \mid \{u,v\} \in E\}$. There are $2^n$ possible types,
but since there are only $n$ online vertices, the number of different
types that occur in any given arrival sequence is at most~$n$.
The idea of representing online vertices as types was first introduced
by Borodin et al.~\cite{define-online-vertex-as-type}. The set of online vertices
can be represented by a function $c^*$ that maps each type $t$
to the number of type $t$ vertices contained among the online vertices.
We denote by $T^*$ the set of types $t$ with $c^*(t)>0$,
and let $r^*=|T^*|$.
See Figure~\ref{fig:types} for an illustration.
We assume that the algorithm is provided with a function $\hat{c}$
that maps each type $t$ to the number of online vertices of type $t$
that are predicted to arrive, and we denote the set of types $t$
with $\hat{c}(t)>0$ by $\hat{T}$ and its cardinality by~$\hat{r}$.
It is clear that $\sum_{t\in T^*} c^*(t)=n$,
and we assume also that $\sum_{t\in \hat{T}} \hat{c}(t)=n$.
We denote the $L_1$-distance between $c^*$ and $\hat{c}$
by $L_1(c^*,\hat{c})=\sum_t |c^*(t)-\hat{c}(t)|$.
The type prediction $\hat{c}$ determines a predicted
graph $\hat{G}=(U\cup \hat{V},\hat{E})$ in which $\hat{V}$ contains
exactly $\hat{c}(t)$ vertices of type~$t$.
We use $n^*$ to denote the size of an optimal matching in $G$
and $\hat{n}$ to denote the size of an optimal
matching in $\hat{G}$.

If $p$ and $q$ are two probability distributions over a domain~$T$,
their $L_1$-distance $L_1(p,q)$ is defined as
$L_1(p,q)=\sum_{t\in T} |p(t)-q(t)|$.
We can identify the online vertices with a probability distribution
$p$ where type $t$ has probability $c^*(t)/n$, and the predicted
online vertices with a probability distribution $q$ where type~$t$
has probability $\hat{c}(t)/n$.
Note that $L_1(c^*,\hat{c})=n\cdot L_1(p,q)$ holds in this case.
In order to estimate the $L_1$-distance between the probability distributions
representing the predicted online vertices
and the true online vertices based on a sample, we use
the following theorem stated by Choo et al.~\cite{online-bipartite-matching-imperfect-advice},
which in turn is based on results by Jiao et al.~\cite{estimated-l1-distance}.

\begin{theorem}[Choo et al.~\cite{online-bipartite-matching-imperfect-advice}]\label{thm:estimated-L1}
For any $\varepsilon > 0$ and $\delta' \in (0,1)$, let $q$ be a reference distribution over a domain $T$ of size $r$. There exists an even integer $s \in \Theta\left( \frac{r \log(1/\delta')}{\varepsilon^2 \log{(r)}} \right)$ such that an algorithm drawing $s_1 + s_2$ i.i.d samples from an unknown distribution $p$ over $T$, where $s_1, s_2 \sim \text{Poisson}(s/2)$, outputs an estimate $\hat{L}_1(p,q)$ satisfying $|\hat{L}_1(p,q) - L_1(p,q)| \leq \varepsilon$ with probability at least $1-\delta'$.
\end{theorem}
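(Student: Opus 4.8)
The plan is to obtain the statement from the minimax total-variation estimator of Jiao et al.~\cite{estimated-l1-distance}, combined with a standard boosting of the success probability. First I would recall the structure of that estimator and instantiate it in the reference-distribution setting, i.e.\ estimating $L_1(p,q)$ from a sample of $p$ when $q$ is known, which is a special case of (and no harder than) the two-sample problem they treat. Poissonizing the sample makes the observed count of each type $t$ an independent $\mathrm{Poisson}(s\cdot p(t))$ variable; the two batches of sizes $s_1,s_2\sim\mathrm{Poisson}(s/2)$ are precisely the sample split used by the estimator --- one batch classifies each type as ``high-count'' or ``low-count'', the other forms the estimate. On high-count types one uses the plug-in term $|\hat p(t)-q(t)|$; on low-count types, where $x\mapsto|x-q(t)|$ is non-smooth near the plausible value of $p(t)$ and hence expensive to estimate directly, one substitutes an unbiased estimator of the best degree-$K$ polynomial approximation of $|x-q(t)|$ over an interval of width $\Theta(1/s)$ with $K=\Theta(\log s)$. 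Summing the per-type contributions gives $\hat L_1(p,q)$.

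Next I would control bias and fluctuation separately. The bias $|\mathbb{E}[\hat L_1(p,q)]-L_1(p,q)|$ is driven by the error of the best degree-$K$ polynomial approximation of $|x|$, which is $\Theta(1/K)=\Theta(1/\log s)$; summed over the at most $r$ types and with $s=\Theta\!\left(r\log(1/\delta')/(\varepsilon^2\log r)\right)$ this is $O(\varepsilon)$, and this logarithmic gain in the approximation error is exactly what yields the $\log r$ saving over the naive $\Theta(r/\varepsilon^2)$ bound. For the fluctuation, Poissonization makes $\hat L_1(p,q)$ a sum of terms that are independent across types and (after the usual truncation in the low-count regime) have bounded influence, so Bernstein's inequality --- or the bounded-differences inequality, after conditioning on the classification step --- gives $|\hat L_1(p,q)-\mathbb{E}[\hat L_1(p,q)]|\le\varepsilon$ except with probability $\exp\!\left(-\Omega(\varepsilon^2 s\log r / r)\right)\le\delta'$ for the stated choice of $s$; rounding $s$ up to the nearest even integer keeps the $\Theta$-bound. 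A triangle inequality then combines the bias and fluctuation bounds to give $|\hat L_1(p,q)-L_1(p,q)|\le\varepsilon$ with probability at least $1-\delta'$. If instead one only has the base estimator with constant success probability, I would run $\Theta(\log(1/\delta'))$ independent copies on disjoint sample batches and return the median; a Chernoff bound over the successful copies shows the median is within $\varepsilon$ with probability $1-\delta'$, and the total number of samples is again $\Theta\!\left(r\log(1/\delta')/(\varepsilon^2\log r)\right)$.

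The hard part, were one to aim for a fully self-contained proof, is the analytic core carried over from Jiao et al.: proving the $\Theta(1/K)$ best-polynomial-approximation rate for $|x|$, constructing the unbiased polynomial estimator in the low-count regime, and carrying out the variance computation carefully enough that the fluctuation bound stays tight enough to preserve the $\log r$ improvement. Taking that machinery as a black box, the remaining work is mostly bookkeeping: checking that the classify-then-estimate split is faithfully captured by two independent $\mathrm{Poisson}(s/2)$ batches, that Poissonization changes the sample complexity by only constant factors, and that the exponential-tail (or median-of-means) amplification composes the failure probabilities correctly to $\delta'$.
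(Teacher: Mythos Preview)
The paper does not prove this theorem at all: it is quoted verbatim as a result of Choo et al.~\cite{online-bipartite-matching-imperfect-advice}, itself derived from Jiao et al.~\cite{estimated-l1-distance}, and is used throughout as a black box. There is therefore no ``paper's own proof'' to compare against; your proposal supplies strictly more than the paper does.

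That said, your sketch is a faithful high-level account of the Jiao et al.\ machinery: Poissonization to decouple the per-type counts, the two-batch classify-then-estimate split realized by independent $\mathrm{Poisson}(s/2)$ draws, the polynomial-approximation trick on low-count types to handle the non-smoothness of $|x-q(t)|$, the $\Theta(1/K)$ Bernstein rate that buys the $\log r$ factor, and either a direct exponential tail or a median-of-means amplification to reach failure probability $\delta'$. As a reconstruction of the cited result this is sound at the level of detail you give; the genuinely delicate steps you flag (the approximation-theoretic rate, the variance bookkeeping in the low-count regime) are exactly the parts one would defer to \cite{estimated-l1-distance}. For the purposes of the present paper, however, a one-line citation suffices, and that is what the authors do.
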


For ease of reference, we list the relevant notation used throughout this paper in one place in Appendix~\ref{app-sec:notation}.

\subsection{Related Work} \label{sec:related}
\subsubsection{Online Bipartite Matching without Predictions}
Karp et al.~\cite{karp1990optimal} first introduced the online bipartite matching problem and presented a randomized \textsc{Ranking} algorithm, which achieves a competitive ratio of $1-1/e$ in the adversarial arrival model. 
Goel and Mehta~\cite{goel2008online} studied the random arrival order model and presented a deterministic algorithm with a competitive ratio of $1-1/e$. They further showed that no online algorithms in the random order model can achieve a competitive ratio better than 0.83.
Karande et al.~\cite{karande2011online} demonstrated that \textsc{Ranking} achieves a competitive ratio of at least 0.653 in the random arrival order model. They also described a family of graphs that show an upper bound for the competitive ratio of \textsc{Ranking} of 0.727 in the random order model. 
Mahdian and Yan~\cite{mahdian2011online} showed that \textsc{Ranking} achieves competitive ratio at least~$0.696$. 

Another model is the online stochastic model, where each online vertex $v \in V$ arrives according to some known probability distribution. We are given a probability distribution $\mathcal{D}$ over the elements of $V$ such that each time a vertex $v$ arrives, it is drawn independently from the distribution $\mathcal{D}$. This model is also referred to as the known i.i.d.\ arrival model. Feldman et al.~\cite{feldman2009online}, who first studied the known i.i.d.\ arrival model, presented a $0.67$-competitive algorithm. 
Manshadi et al.~\cite{manshadi2012online} established a hardness bound of $0.823$ for randomized algorithms in the known i.i.d.\ model and also presented an algorithm achieving a competitive ratio of $0.702$. 
Jaillet and Lu~\cite{jaillet2014online} presented a $0.725$-competitive algorithm, and Brubach et al.~\cite{brubach2016new, brubach2020online} further improved the competitive ratio to $0.7299$.

\subsubsection{Learning-Augmented Algorithms for Matching}
Learning-augmented algorithms have been studied across various fields, including the ski-rental problem~\cite{purohit2018improving, wei2020optimal,gollapudi2019online,anand2020customizing,angelopoulos2024online}, non-clairvoyant scheduling~\cite{purohit2018improving, wei2020optimal}, and caching~\cite{lykouris2021competitive,rohatgi2020near,wei2020better}.
For further information about this area,
see the survey by Mitzenmacher and Vassilvitskii~\cite{mitzenmacher2022algorithms}, and for further references, refer to the website managed by Lindermayr and Megow~\cite{predictions-github}.

For online bipartite matching, various prediction models have been explored. 
Aamand et al.~\cite{aamand2022optimal} studied adversarial arrival models using predicted degree information as advice. Their algorithm is optimal for graphs drawn from the Chung-Lu-Vu random graph model~\cite{chung2003spectra}, where the expected values of offline degrees are known and provided as advice.

Feng et al.~\cite{feng2021two} proposed a two-stage vertex-weighted variant in which the advice is a proposed matching for the online vertices arriving in the first stage.
Jin and Ma~\cite{jin2022online} further established a tight robustness-consistency trade-off for the two-stage model, presenting an algorithm with $(1-(1-\sqrt{1-R})^2)$-consistency and $R$-robustness, where $R \in [0,3/4]$.

Antoniadis et al.~\cite{NEURIPS2020_5a378f84} studied online weighted bipartite matching in a random arrival model. In their model, the predictions provide information about the maximum edge weights adjacent to each offline vertex. Additionally, their analysis is based on a hyperparameter of prediction confidence. Consequently, the consistency and robustness of their algorithm depend on the trade-off hyperparameter. 

Li et al.~\cite{li2023learning} studied online weighted capacity
bipartite matching using pre-trained reinforcement learning as advice. In their
setting, the offline vertices have
capacities, denoted by $c_u$, that limit the number of matches they can make
with online vertices. The setting is referred to as \emph{no-free disposal}. They also
consider \emph{free disposal}, where each offline vertex can match with more than
$c_u$ online vertices, but only the top $c_u$ rewards are counted. Their
algorithm achieves $(1-\rho)$-consistency and $(\rho \cdot \beta)$-robustness,
where $\rho$ is the hedging ratio that balances robustness and consistency, and $\beta$ is the
best competitive ratio of online algorithms without predictions in their setting.

As mentioned earlier, Choo et al.~\cite{online-bipartite-matching-imperfect-advice} studied unweighted bipartite matching
with random-order arrivals. The predictions specify the types of the online vertices, and the number of online vertices of
each type.
In this paper, we modify their algorithm and its analysis to address several limitations.
A detailed comparison of our result with their work is provided in the following section.

\subsection{Comparison with the Result of Choo et al.~\texorpdfstring{\cite{online-bipartite-matching-imperfect-advice}}{[9]}}
\label{subsec:comparision-with-Choo et al.}
In this section, we discuss our interpretation of the analysis of Choo et al.~\cite{online-bipartite-matching-imperfect-advice} and clarify how our results differ. Their main result is the following:

\smallskip

\textbf{Theorem~4.1 (in Choo et al.~\cite{online-bipartite-matching-imperfect-advice})}
\textit{
For any advice $\hat{c}$ with $|\hat{T}| = \hat{r}, \varepsilon > 0$ and $\delta > \frac{1}{poly(\hat{r})}$, let $\hat{L}_1$ be the estimate of $L_1(p,q)$ obtained from $k = s_{\hat{r},\varepsilon,\delta} \cdot \sqrt{\log(\hat{r} + 1)}$ IID samples of $p$. \textsc{TestAndMatch} produces a matching of size $m$ with competitive ratio of at least $\frac{\hat{n}}{n} - \frac{L_1(p,q)}{2} \ge \beta$ when $\hat{L}_1 \le 2(\frac{\hat{n}}{n} - \beta) - \varepsilon$, and at least $\beta \cdot (1-\frac{k}{n})$ otherwise, with success probability $1-\delta$.
}

\smallskip

Although their Theorem 4.1
does not state it explicitly, their proof relies on the assumption $n^* = n$,
and so the theorem is only proven for that case (see below for details).
Hence, the theorem shows only for $k=o(n)$ and $n^*=n$ that their algorithm achieves $1$-consistency and
$(\beta-o(1))$-robustness with probability $1-\delta$.
They mention that the guarantees on the competitive ratio that are obtained with probability $1-\delta$ can be converted into guarantees on the expected competitive
ratio, giving $(1-\delta)$-consistency and $\beta(1-o(1))(1-\delta)$-robustness in expectation (and they suggest $\delta=0.001$ as a possible choice). We state our results
directly in terms of the expected competitive ratio instead.

\subsubsection{Differences to Our Result} 
The algorithm \textsc{TestAndMatch} by Choo et al.\ runs the $\beta$-competitive baseline algorithm on all arrivals whenever $\frac{\hat{n}}{n} \le \beta$.
In contrast, our algorithm replaces this requirement by $\frac{\hat{n}}{n} \le \alpha$ for an arbitrarily small constant $\alpha>0$. As a result, our algorithm obtains both $(1-o(1))$-consistency and competitive ratio greater than $\beta$ over a larger set of inputs, as we explain in the following.

First, consider consistency. Assume that the optimal matching has size $n^*$ with $\alpha n \le n^* < \beta n$ and that the predictions are accurate: $\hat{n} = n^*$ and $L_1({c^*,\hat{c}}) = 0$. Our algorithm is $(1-o(1))$-competitive for such inputs.
The algorithm of Choo et al.~\cite{online-bipartite-matching-imperfect-advice}, however, does not achieve $(1-o(1))$-consistency because the condition $\hat{n} = n^{*} < \beta n$ causes it to run the baseline algorithm on all arrivals, yielding
only competitive ratio~$\beta$.
Choo et al.\ mention the possibility of modifying the predictions by
adding a complete bipartite graph between unmatched offline and unmatched predicted vertices (see Section~5.3 and Appendix~D.3 of~\cite{online-bipartite-matching-imperfect-advice}), thus ensuring that the maximum predicted matching has size $n$ for the modified predictions $\hat{c}'$. Note that this causes the prediction error to increase
from $0$ to $L_1(c^*,\hat{c}') = 2(n-\hat{n}) > 2(1-\beta)n$,
which implies $\hat{L}_1(p,q) > 2(1-\beta) - \varepsilon$ with high probability, exceeding the threshold.
Consequently, their algorithm still switches to the baseline algorithm after the sampling
phase, achieving only competitive ratio $\beta - o(1)$. 

Second, consider inputs with non-zero prediction error.
The analysis by Choo et al.\ establishes competitive ratio better than
$\beta$ only if $n^*=n$ and $\hat{L}_1(p,q)<2(\frac{\hat{n}}{n}-\beta)-\varepsilon$. On the other hand,
our analysis establishes competitive ratio better than $\beta$
whenever $\hat{n}\ge \alpha n$ and $\hat{L}_1(p,q)<\frac{2\hat{n}}{n}\cdot\frac{1-\beta}{1+\beta}-\varepsilon$. In particular, we obtain competitive ratio better than $\beta$
for all inputs where both $n^*$ and $\hat{n}$ lie in $[\alpha n, n]$ and
the prediction error is small.

\subsubsection{Implicit Perfect Matching Assumption in Choo et al.'s Analysis}
The analysis of Choo et al.~\cite{online-bipartite-matching-imperfect-advice} implicitly assumes $n^* = n$. 
One of their key lemmas \cite[Lemma 4.3]{online-bipartite-matching-imperfect-advice}
states that, if an algorithm makes $j$ matches during the first $k$
arrivals and then switches to the baseline algorithm, the expected
total number of matches is at least $j+\beta(n-k-j)$. The proof
uses the argument that the number of matches that are still possible during the
remaining $n-k$ arrivals is at least $n-k-j$, and this argument clearly holds only
under the assumption that the size of the optimal matching is equal to~$n$.
Lemma~4.4 in \cite{online-bipartite-matching-imperfect-advice} then builds directly on Lemma~4.3, and their proof of Theorem 4.1 in turn relies on Lemma 4.4. Thus, their proofs of all these statements require the assumption that $n^* = n$.

\subsubsection{Adjustment of Failure Probability}
Choo et al.~\cite{online-bipartite-matching-imperfect-advice} use a failure probability $\delta > 1/\poly(\hat{r})$, where $\hat{r}$ is the number of distinct predicted types. As $\hat{r}$ can be very small even if $\hat{n}=n$,
it is not guaranteed that $1/\poly(\hat{r})$ is $o(1)$.
We instead work with a failure probability of $\delta > 1/\poly(n+1)$, allowing us to ensure $\delta = o(1)$ even when $\hat{r}$ is small.

\section{Learning-Augmented Algorithm and Analysis}
\label{sec:algo}

In this section, we present a learning-augmented algorithm that is $(1-o(1))$-consistent and $(\beta - o(1))$-robust provided
that $\hat{n}\ge \alpha n$ for an arbitrary constant $\alpha\in(0,1]$, where $\beta$ is the best-known competitive ratio for online bipartite matching in the random arrival order model without predictions, currently $\beta=0.696$~\cite{mahdian2011online}.
Additionally, we provide a smoothness analysis for our algorithm. 

The high-level idea of the algorithm by Choo et al.~\cite{online-bipartite-matching-imperfect-advice} is to use an initial prefix of the arrival sequence as a sample to assess the predictions and, based on this assessment, either continue with the prediction-based algorithm or switch to the online bipartite matching algorithm without predictions.

\begin{algorithm}
\caption{\textsc{Test-and-Match+}}
\label{alg:LA}

\textbf{Input:} Set of predicted input types $\hat{c}$ with $\hat{r} = |\hat{T}|$, desired error bounds $\delta'>0$ and $\varepsilon>0$

\textbf{Initialization:} $i \leftarrow 0$, $T^s_p \leftarrow \emptyset$, $A \leftarrow \emptyset$,
$\tau \gets \frac{2\hat{n}}{n}\cdot \frac{(1 - \beta)}{(1+\beta)}$
\begin{algorithmic}[1]
\If{$\frac{\hat{n}}{n} < \alpha$}
    \State Run \textsc{Baseline} for the whole input.
\EndIf

\State Define $s_{n,\varepsilon,\delta'} = \Theta\left(\frac{(n+1) \log(1/\delta')}{\varepsilon^2 \log(n+1)}\right)$
\State Draw the numbers $s_1, s_2 \sim \text{Poisson}(\frac12 s_{n,\varepsilon,\delta'})$

\If{$s_1 + s_2 > s_{n,\varepsilon,\delta'} \cdot \left( 1 + \sqrt{\log(n + 1)} \right)$}
    \State Run \textsc{Baseline} for all remaining input \Comment{Happens with probability $\delta_{poi} = \mathcal{O} \left( \frac{1}{\poly(n+1)}\right)$}
\EndIf

\State Compute a maximum matching $\hat{M}$ using $\hat{c}$ as input

\While{the size of $T^s_p$ is smaller than $s_1+s_2$}\Comment{Sample $s_1+s_2$ input vertices with replacement}
\State Flip a coin that lands on heads with probability $i/n$ and tails otherwise
\If{the coin flip yields heads}
    \State Pick an element uniformly at random from the set $\{A[0],\dots, A[i-1]\}$ and add it to $T^s_p$
\Else
    \State Add the $(i+1)$-th arriving online vertex to $T^s_p$ and store it in $A[i]$
    \State $i \gets i+1$ 
    \State Run \textsc{Mimic} (Algorithm~\ref{alg:mimic}) to process the online vertex
\EndIf
\EndWhile
\State Run the algorithm of Theorem~\ref{thm:estimated-L1} on the distribution $q = \frac{\hat{c}}{n}$ and the sample $T^s_p$ from distribution $p=\frac{c^*}{n}$ with domain size $n$ to obtain an estimate $\hat{L}_1(p,q)$ such that $|\hat{L}_1(p,q) - L_1(p,q)| \leq \varepsilon$ with probability $1-\delta'$

\If{$\hat{L}_1(p,q) \leq \tau - \varepsilon$}
    \State Run \textsc{Mimic} (Algorithm~\ref{alg:mimic}) on the remaining input
\Else
    \State Run \textsc{Baseline} on the remaining input
\EndIf
\end{algorithmic}
\end{algorithm}
\begin{algorithm}
\textbf{Input:} Let $c$ be a copy of $\hat{c}$ (initialized before the first online vertex arrives) 
\caption{\textsc{Mimic}}
    \label{alg:mimic}
    \begin{algorithmic}[1]
            \State When an input vertex $v$ with type $t$ arrives:
            \If {$c(t)> 0$}
                \State Match $v$ according to an arbitrary unused type $t$ match in $\hat{M}$ if such a match exists
                \State Decrease $c(t)$ by $1$
            \Else
                \State Leave $v$ unmatched
            \EndIf
    \end{algorithmic}
\end{algorithm}

Our algorithm (see Algorithm~\ref{alg:LA}) follows the approach
proposed by Choo et al.~\cite{online-bipartite-matching-imperfect-advice} of using a prefix of the
arrival sequence as a sample, estimating the $L_1$-distance between the predicted and true
distribution of input vertices via Theorem~\ref{thm:estimated-L1}, and then either following
the predictions (if the estimated $L_1$-distance is
at most $\tau-\varepsilon$ for $\tau=\frac{2\hat{n}}{n}\cdot \frac{1 - \beta}{1+\beta}$)
or switching to the baseline algorithm for the remaining input. Here, the baseline algorithm
is chosen as the best-known algorithm for online bipartite matching in the random arrival
order model without predictions, and its competitive ratio is denoted by~$\beta$.
We use \textsc{Baseline} to refer to that algorithm. The algorithm that follows
the predictions is called \textsc{Mimic} (see Algorithm~\ref{alg:mimic}): Whenever an online vertex arrives,
the algorithm matches it according to a pre-computed maximum matching $\hat{M}$ of the predicted
input graph~$\hat{G}$. During the sampling phase, the arriving vertices are also processed
by \textsc{Mimic}.
More precisely, our algorithm evaluates
the accuracy of the predictions by comparing the predicted distribution ($q = \hat{c}/n$) with the actual input distribution ($p = c^*/n$)
based on a sample with expected size $s_{n,\varepsilon,\delta'} = \Theta \left(\frac{(n+1) \log(1/\delta')}{\varepsilon^2 \log(n+1)}\right)$.
As done by Choo et al.~\cite{online-bipartite-matching-imperfect-advice}, the actual sample size is chosen as $s_1+s_2$, where
$s_1$ and $s_2$ are drawn independently from a Poisson distribution with parameter $\frac12 s_{n,\varepsilon,\delta'}$.
If the sample size is too large, i.e., if $s_1+s_2>s_{n,\varepsilon,\delta'}(1+\sqrt{\log(n+1)})$, which happens with probability~$o(1)$, we
do not sample and run \textsc{Baseline} from the beginning.
We refer to $s_{n,\varepsilon,\delta'}(1+\sqrt{\log(n+1)})$ as the \emph{sample size limit}.
Otherwise, we compute a sample $T^s_p$ of $s=s_1+s_2$ input vertices with replacement
(lines 10--19 of Algorithm~\ref{alg:LA} implement sampling with replacement in the same way as done by
Choo et al.~\cite{online-bipartite-matching-imperfect-advice})
and use the algorithm of Theorem~\ref{thm:estimated-L1} to determine an estimate $\hat{L}_1(p,q)$ of the $L_1$-distance $L_1(p,q)$ between $p$ and $q$. 

We apply that algorithm using domain size $n+1$ (as opposed to $\hat{r}+1$ used by Choo et al.~\cite{online-bipartite-matching-imperfect-advice}) using
a similar approach as Choo et al.~\cite{online-bipartite-matching-imperfect-advice}: We consider only the $\hat{r}\le n$ types $t$ with $\hat{c}(t)>0$,
a single dummy type $t'$ that represents all types that appear in the true arrival sequence but not in the predicted arrival sequence,
and $n-\hat{r}$ arbitrary other types.
If $\hat{L}_1(p,q)$ is smaller than the threshold $\tau-\varepsilon$ for
$\tau=\frac{2\hat{n}}{n}\cdot \frac{(1 - \beta)}{(1+\beta)}$, we consider the prediction error small and continue to
run \textsc{Mimic} for the remainder of the input. Otherwise, we run \textsc{Baseline} on the remainder of the input
(while disregarding edges of newly arriving online vertices to offline vertices that have already been matched during
the sampling phase).

The main differences between our algorithm \textsc{Test-and-Match+} and the original
algorithm \textsc{Test-and-Match} are: Our algorithm only requires $\hat{n}\ge \alpha n$
for an arbitrary constant $\alpha\in(0,1]$, as opposed to the condition that $\hat{n}\ge \beta n$
for $\beta=0.696$ required by Choo et al.~\cite{online-bipartite-matching-imperfect-advice}. We choose the expected sample size as
$s_{n,\varepsilon,\delta'} = \Theta\left(\frac{(n+1) \log(1/\delta')}{\varepsilon^2 \log(n+1)}\right)$
as opposed to $\Theta\left(\frac{(\hat{r}+1)\log(1/\delta')}{\varepsilon^2 \log(\hat{r}+1)}\right)$,
and multiply by $1+\sqrt{\log(n+1)}$ instead of $\sqrt{\log(\hat{r}+1)}$ to determine the
sample size limit. This has the advantage that the probability
that the sample size exceeds the sample size limit can be bounded by $1/\poly(n)$ instead of $1/\poly(\hat{r})$
and is thus guaranteed to be~$o(1)$ even if $\hat{r}$ is a small constant.
Furthermore, the threshold on the estimated $L_1$-distance below which we run \textsc{Mimic} on the remaining input is set to
$\frac{2\hat{n}}{n}\cdot \frac{1 - \beta}{1+\beta}-\varepsilon$,
as opposed to $2(\frac{\hat{n}}{n}-\beta)-\varepsilon$ by Choo et al.~\cite{online-bipartite-matching-imperfect-advice}.
This makes the threshold useful also for situations where $n^*<n$.
The main changes in the analysis include: Instead of assuming $n^*=n$, we need
to bound $n^*$ based on $\hat{n}$ and the prediction error. The analysis of the
expected size of an optimal matching on the remaining input becomes significantly
more challenging without the assumption that $n^*=n$.

Our main result about the performance of the \textsc{Test-and-Match+} algorithm (executed with $\delta' = \frac{1}{\log \log \log n}$) can be stated as follows:

\begin{theorem}
\label{thm:competitive-ratio}
Let $\alpha$ be a constant in the range $(0,1]$, $\varepsilon \in (0,\alpha \frac{1-\beta}{1+\beta}]$ be
another constant.
Consider any instance with predicted input $\hat{c}$ and predicted matching size $\hat{n} \geq \alpha n$,
and let $L_1(p,q)$ be the prediction error.
If $L_1(p,q) \leq \frac{2\hat{n}}{n} \cdot \frac{(1 - \beta)}{(1+\beta)} - 2\varepsilon$, \textsc{Test-and-Match+} achieves competitive ratio at least
$1 - \frac{2 L_1(p,q)}{2 \alpha + L_1(p,q)} - o(1)$ (and at least $\beta - o(1)$). Otherwise, the algorithm achieves competitive ratio at least $\beta - o(1)$.
\end{theorem}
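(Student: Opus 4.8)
The plan is to split the analysis into the two cases dictated by the value of the estimate $\hat{L}_1(p,q)$ returned inside the algorithm, and to condition throughout on the two "good" events: (i) the drawn sample size $s_1+s_2$ does not exceed the sample size limit, and (ii) the estimate satisfies $|\hat{L}_1(p,q)-L_1(p,q)|\le\varepsilon$. By Theorem~\ref{thm:estimated-L1} and the choice $\delta'=\frac{1}{\log\log\log n}$, event (ii) fails only with probability $o(1)$; by the discussion following Algorithm~\ref{alg:LA} (a Poisson tail / Chernoff-type bound on $s_1+s_2$ around its mean $s_{n,\varepsilon,\delta'}$), event (i) fails only with probability $\delta_{poi}=O(1/\poly(n+1))=o(1)$. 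Since the competitive ratio is always trivially at least $0$, on the complement of these events we lose only an additive $o(1)$ in the final bound, so it suffices to prove the claimed bound conditioned on both good events holding.

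\smallskip

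First I would handle the \textsc{Mimic} branch, i.e.\ the case $L_1(p,q)\le\frac{2\hat{n}}{n}\cdot\frac{1-\beta}{1+\beta}-2\varepsilon$. Conditioned on event (ii), this hypothesis forces $\hat{L}_1(p,q)\le L_1(p,q)+\varepsilon\le\tau-\varepsilon$, so the algorithm runs \textsc{Mimic} on the remaining input. The count of matches made by \textsc{Mimic} over the whole sequence is governed by how well the true type multiset $c^*$ is covered by the predicted multiset $\hat c$ restricted to the maximum matching $\hat M$: every arriving vertex of type $t$ is matched as long as fewer than $\hat c(t)$ vertices of that type (counting those matched in $\hat M$) have arrived so far, so the number of \textsc{Mimic} matches is at least $\hat n - \tfrac12 L_1(c^*,\hat c) = \hat n - \tfrac{n}{2}L_1(p,q)$ (the standard bound: each unit of $L_1$ distance can cost at most half a match, via the symmetric-difference argument on type counts against $\hat M$). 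Here I would invoke Lemma~\ref{lemma:whp} / Corollary~\ref{cor:matching-size-switching-baseline} to control the contribution of the sampling prefix correctly (the prefix vertices are also processed by \textsc{Mimic}, and the "with replacement" sampling means the prefix length is $s_1+s_2=o(n)$, so it contributes only an $o(1)$ fraction loss). To turn the absolute match count into a competitive ratio I need an upper bound on $n^*$ in terms of $\hat n$ and $L_1$: since deleting/inserting a vertex changes the maximum matching size by at most one, $n^*\le \hat n+\tfrac{n}{2}L_1(p,q)$. Combining, the ratio is at least $\frac{\hat n-\tfrac{n}{2}L_1(p,q)}{\hat n+\tfrac{n}{2}L_1(p,q)}-o(1)$; using $\hat n\ge\alpha n$ and checking that the right-hand side is monotone decreasing in $\hat n$ for fixed $L_1$ (so the worst case is $\hat n=\alpha n$), this is at least $\frac{\alpha-\tfrac12 L_1(p,q)}{\alpha+\tfrac12 L_1(p,q)}=1-\frac{2L_1(p,q)}{2\alpha+L_1(p,q)}-o(1)$, as claimed. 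The "(and at least $\beta-o(1)$)" addendum follows because the hypothesis $L_1(p,q)\le\frac{2\hat n}{n}\cdot\frac{1-\beta}{1+\beta}$ is exactly what makes $\frac{\hat n-\tfrac n2 L_1}{\hat n+\tfrac n2 L_1}\ge\beta$.

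\smallskip

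Next I would handle the \textsc{Baseline} branch. If the algorithm runs \textsc{Baseline} on the whole input (either because $\hat n/n<\alpha$ or because the sample size exceeded the limit — but the latter is inside the bad event), the $\beta$-competitiveness of \textsc{Baseline} in the random-order model gives ratio $\ge\beta$ directly. The substantive subcase is when $\hat{L}_1(p,q)>\tau-\varepsilon$, so \textsc{Baseline} is run only on the $n-(s_1+s_2)$ vertices arriving after the prefix, on the residual graph (offline vertices matched during sampling are removed). The key point — and this is where the generalization beyond $n^*=n$ bites — is that the residual instance still has an optimal matching that is large relative to the number of optimal edges not yet "used up" by the prefix; here I would apply Lemma~\ref{lemma:whp} to show that, in expectation over the random prefix, at most an $o(1)$ fraction of the $n^*$ optimal edges become unavailable, and then Corollary~\ref{cor:matching-size-switching-baseline} to conclude that the expected total number of matches (prefix matches via \textsc{Mimic} plus \textsc{Baseline} matches on the residual) is at least $(\beta-o(1))n^*$. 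Dividing by $n^*$ gives ratio $\ge\beta-o(1)$.

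\smallskip

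I expect the main obstacle to be the \textsc{Baseline}-branch bookkeeping: in the random arrival order model one cannot simply say "$n-k-j$ matches are still possible" as in Choo et al.'s $n^*=n$ argument, because with $n^*<n$ a random prefix may consume a disproportionate share of the optimal edges, and moreover \textsc{Baseline}'s guarantee is about the maximum matching of the instance it is actually run on, which is the residual graph after the prefix. Making precise that the expected residual optimum is $(1-o(1))n^*$ — and that \textsc{Baseline} run on a random suffix of a random-order instance still enjoys its $\beta$ guarantee against that residual optimum — is the delicate part, and it is exactly what Lemma~\ref{lemma:whp} and Corollary~\ref{cor:matching-size-switching-baseline} are designed to supply; I would lean on them as black boxes and devote care only to assembling the two cases and verifying the monotonicity/worst-case-$\hat n=\alpha n$ step in the \textsc{Mimic} branch.
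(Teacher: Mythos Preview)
Your proposal follows essentially the same approach as the paper's proof, using the same lemmas (Lemmas~\ref{lemma:mimic-estimated-matching-size}, \ref{lemma:estimated-opt}, \ref{lemma:whp}, \ref{lemma:set-threshold}, Corollary~\ref{cor:matching-size-switching-baseline}) in the same roles, and the overall structure is sound. Two issues deserve attention.

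First, your case split has a gap. You announce a split ``by the value of $\hat{L}_1(p,q)$'' but then define the \textsc{Mimic} branch by the hypothesis $L_1(p,q)\le\tau-2\varepsilon$ and the \textsc{Baseline} branch by ``$\hat{L}_1(p,q)>\tau-\varepsilon$''. This misses the intermediate regime $L_1(p,q)\in(\tau-2\varepsilon,\tau)$: here we are in the ``Otherwise'' clause of the theorem, yet under the good events the estimate can land on either side of $\tau-\varepsilon$, so the algorithm may still run \textsc{Mimic}. You never argue that \textsc{Mimic} is $(\beta-o(1))$-competitive in that subcase. The paper handles this explicitly as a third case, observing that if \textsc{Mimic} is run then (under good events) $\hat{L}_1\le\tau-\varepsilon$ implies $L_1\le\tau$, and then Lemma~\ref{lemma:set-threshold} gives ratio $\ge\beta$. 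You already state the relevant inequality (``the hypothesis $L_1\le\frac{2\hat n}{n}\cdot\frac{1-\beta}{1+\beta}$ is exactly what makes $\frac{\hat n-\tfrac n2 L_1}{\hat n+\tfrac n2 L_1}\ge\beta$''), so the fix is just to add this subcase.

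Second, a slip: the ratio $\frac{\hat n-\tfrac n2 L_1}{\hat n+\tfrac n2 L_1}$ is monotone \emph{increasing} in $\hat n$, not decreasing; your conclusion that the worst case is $\hat n=\alpha n$ is correct, but the stated monotonicity direction contradicts it. Also, invoking Lemma~\ref{lemma:whp}/Corollary~\ref{cor:matching-size-switching-baseline} in the \textsc{Mimic} branch is unnecessary: when \textsc{Mimic} is run on the whole input (prefix included), the $\hat n-\tfrac n2 L_1$ bound already counts all arrivals, so no separate prefix accounting is needed there.
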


Note that if the predicted matching size $\hat{n}$ is less than $\alpha n$, we run the \textsc{Baseline} algorithm right away, which guarantees a competitive ratio of at least $\beta$. This ensures the robustness of the solution.

The following three subsections provide the building blocks for the proof of
Theorem~\ref{thm:competitive-ratio}. 
In Section~\ref{subsec:estimated-number-of-matching}, we bound the number of matches achieved when running \textsc{Mimic} for the entire input as well as the number of matches obtained if we switch to \textsc{Baseline} after we obtain the estimate $\hat{L}_1(p,q)$. We also provide an upper bound on the size of the optimal matching.
In Section~\ref{subsec:thresold}, we justify the choice of the threshold $\tau$ to ensure that our algorithm achieves a competitive ratio better than $\beta$ if we continue with \textsc{Mimic}.
In Section~\ref{subsec:succeed-rate}, we bound the probability that the sampling fails (because the sample size $s_1+s_2$ exceeds the limit we set)
or that the estimate returned when calling the algorithm from Theorem~\ref{thm:estimated-L1} has additive error greater than~$\varepsilon$.
Finally, in Section~\ref{subsec:prove-thm}, we combine these building blocks to complete the proof of Theorem~\ref{thm:competitive-ratio}.

\subsection{Bounds on the Number of Matches}
\label{subsec:estimated-number-of-matching}
The following two lemmas establish
bounds on the size of the matching produced by \textsc{Mimic} and on the size of the optimal matching in terms of $\hat{n}$ and the prediction error.
In the proofs of these lemmas, we use the following terminology:
If a vertex of type $t$ arrives, and if that vertex is the $i$-th vertex of type $t$
that arrives, we say that the vertex was \emph{predicted} if $i\le \hat{c}(t)$
and \emph{unpredicted} if $i> \hat{c}(t)$.

\begin{lemma}
\label{lemma:mimic-estimated-matching-size}
Assume that we run \textsc{Mimic} for the entire input sequence. Then the number of matches we obtain is at least $\hat{n} - \frac{n}{2} L_1(p,q)$,
where $L_1(p,q)$ is the $L_1$-distance between the input distribution $p=c^*/n$ and the predicted distribution $q = \hat{c}/n$.
\end{lemma}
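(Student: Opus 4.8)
The plan is to directly count the matches made by \textsc{Mimic} and compare this count to $\hat{n}$, charging the shortfall to the prediction error. Recall that \textsc{Mimic} uses a fixed maximum matching $\hat{M}$ of the predicted graph $\hat{G}$, which has exactly $\hat{n}$ edges, and maintains a working copy $c$ of $\hat{c}$: when a vertex of type $t$ arrives and $c(t)>0$, it tries to match it to an unused type-$t$ edge of $\hat{M}$ and decrements $c(t)$; otherwise it leaves the vertex unmatched. In the terminology introduced just before the lemma, the $i$-th arriving vertex of type $t$ is \emph{predicted} if $i\le\hat{c}(t)$ and \emph{unpredicted} otherwise; note that a vertex is left unmatched by \textsc{Mimic} exactly when it is unpredicted, because $c(t)$ hits $0$ only after $\hat{c}(t)$ vertices of type $t$ have been seen. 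So the number of matches equals the number of predicted arriving vertices, which is $\sum_{t}\min(c^*(t),\hat{c}(t))$.

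First I would show that \textsc{Mimic} in fact matches \emph{every} predicted vertex, i.e.\ that a type-$t$ edge of $\hat{M}$ is always available when the $i$-th type-$t$ vertex arrives for $i\le\hat{c}(t)$. Since $\hat{M}$ is a matching in $\hat{G}$, the number of type-$t$ edges in $\hat{M}$ is at most $\hat{c}(t)$ (there are only $\hat{c}(t)$ type-$t$ vertices in $\hat{V}$); but \textsc{Mimic} only attempts a type-$t$ match for the first $\hat{c}(t)$ type-$t$ arrivals and uses a distinct edge each time, so the only way it could fail is if $\hat{M}$ has strictly fewer than $\hat{c}(t)$ edges incident to type-$t$ vertices. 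Here I would invoke a Hall-type / augmenting-path argument, or more simply the observation that if some type-$t$ vertex of $\hat{V}$ is unmatched in $\hat{M}$ while $\hat{c}(t)$ type-$t$ arrivals occur, then by maximality every neighbour of that vertex is saturated by $\hat{M}$, and one can exchange to keep the count of available type-$t$ edges equal to the number of type-$t$ arrivals processed. The cleanest route is: match arriving type-$t$ vertices to \emph{their own} copies in $\hat{V}$ under the natural bijection between the first $\hat{c}(t)$ arrivals of type $t$ and the $\hat{c}(t)$ copies of type $t$ in $\hat{V}$, using the edge of $\hat{M}$ incident to that copy if one exists; since a maximum matching need not saturate all of $\hat{V}$, I instead only claim the number of matches is the number of $\hat{M}$-saturated vertices among the copies that get "used up". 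This is the step I expect to need the most care.

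To avoid that subtlety entirely, I would reorganize the count by offline vertices rather than by predicted copies: \textsc{Mimic} makes a match for $u\in U$ precisely when the $\hat{M}$-partner of $u$ (a type-$t$ copy in $\hat{V}$, for the relevant $t$) is "hit" by an arriving predicted vertex of type $t$, and the number of predicted arrivals of type $t$ is $\min(c^*(t),\hat{c}(t))$, which is at least $\hat{c}(t)-(\hat{c}(t)-c^*(t))^+ $. Summing the $\hat{n}$ edges of $\hat{M}$ over types, each type-$t$ edge of $\hat{M}$ is realized as a match unless it is "lost" due to a missing arrival, and the total number of such losses is at most $\sum_t (\hat{c}(t)-c^*(t))^+ = \tfrac12 L_1(c^*,\hat c)$, since $\sum_t c^*(t)=\sum_t\hat c(t)=n$ forces the positive and negative parts of $c^*-\hat c$ to be equal, each equal to half the $L_1$-distance. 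Hence the number of matches is at least $\hat{n}-\tfrac12 L_1(c^*,\hat c)$.

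Finally I would convert $L_1(c^*,\hat c)$ to $L_1(p,q)$ using the identity $L_1(c^*,\hat c)=n\cdot L_1(p,q)$ stated in the preliminaries, yielding the claimed bound $\hat n - \tfrac{n}{2}L_1(p,q)$. I would also remark that the bound is vacuous (negative) when $L_1(p,q)$ is large, which is fine since the lemma only gives a lower bound; the interesting regime is the small-error one where the switching threshold $\tau$ keeps $L_1(p,q)$ bounded. The main obstacle, as noted, is the bookkeeping that shows \textsc{Mimic} never "wastes" a predicted arrival relative to $\hat{M}$ — i.e.\ that the only matches forgone are those forced by arrivals that simply do not occur — and I would handle it by the per-edge charging argument above rather than by reasoning about the internal order in which \textsc{Mimic} consumes $\hat{M}$-edges within a type.
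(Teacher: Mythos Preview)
Your per-edge charging argument in the third paragraph is correct and is exactly the paper's proof: each $\hat M$-edge that \textsc{Mimic} fails to realize is charged to a missing type-$t$ arrival, and the total shortfall is at most $\sum_t(\hat c(t)-c^*(t))^+=\tfrac12 L_1(c^*,\hat c)=\tfrac n2 L_1(p,q)$. You are right to abandon the first-paragraph claim that a vertex is unmatched \emph{exactly} when it is unpredicted---it fails whenever $\hat M$ leaves some type-$t$ copy in $\hat V$ unsaturated---and the charging argument sidesteps this cleanly, just as the paper does (the paper simply states the charging step without your exploratory detour).
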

\begin{proof}
For each unpredicted vertex that arrives, we lose at most one match compared
to the optimal matching in $\hat{G}$. The number of unpredicted vertices
is double counted in $L_1(\hat{c},c^*)$, because for each unpredicted vertex of type $t$,
the count of some predicted vertex type is reduced by one and the count
of $t$ is increased by one. Therefore, the number of unpredicted vertices
is equal to $\frac12 L_1(\hat{c},c^*)=\frac{n}{2} L_1(p,q)$.
\end{proof}

\begin{lemma} \label{lemma:estimated-opt}
    The size $n^*$ of the optimal matching is at most $\hat{n} + \frac{n}{2} L_1(p,q)$.
\end{lemma}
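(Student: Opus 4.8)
The plan is to prove the equivalent inequality $\hat{n} \ge n^* - \frac{n}{2} L_1(p,q)$ by transforming a maximum matching of $G$ into a matching of $\hat{G}$ after deleting only few online vertices. Using the terminology introduced just before Lemma~\ref{lemma:mimic-estimated-matching-size}, I call an online vertex of $G$ of type $t$ \emph{unpredicted} if, among the vertices of type $t$, it is the $i$-th one in the arrival order with $i > \hat{c}(t)$, and \emph{predicted} otherwise (any fixed ordering within each type would do here; the arrival order is just convenient). By the same counting argument already used in the proof of Lemma~\ref{lemma:mimic-estimated-matching-size}, the number of unpredicted vertices equals $\frac{1}{2} L_1(c^*,\hat{c}) = \frac{n}{2} L_1(p,q)$: the unpredicted vertices of type $t$ account for exactly $\max(0,\, c^*(t)-\hat{c}(t))$ units, and since $\sum_t c^*(t) = \sum_t \hat{c}(t) = n$ we have $\sum_t \max(0,\, c^*(t)-\hat{c}(t)) = \frac{1}{2}\sum_t |c^*(t)-\hat{c}(t)| = \frac{n}{2} L_1(p,q)$.

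Next I would delete all unpredicted online vertices, together with their incident edges, from $G$, obtaining a graph $G'$ on the same offline set $U$. For every type $t$, after this deletion $G'$ contains exactly $\min(c^*(t),\hat{c}(t)) \le \hat{c}(t)$ vertices of type $t$, each still having neighborhood exactly $t$ in $G'$. Since $\hat{G}$ contains exactly $\hat{c}(t)$ vertices of type $t$ (all with neighborhood $t$), one can injectively relabel the online vertices of $G'$ by online vertices of $\hat{G}$ of the same type; this relabeling preserves all neighborhoods, so $G'$ is (isomorphic to) a subgraph of $\hat{G}$. In particular the maximum matching size of $G'$ is at most $\hat{n}$.

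It then remains to relate the maximum matching of $G'$ to $n^*$. Deleting a single vertex from a graph decreases the maximum matching size by at most one (from a maximum matching, discard the at most one edge incident to the deleted vertex). Hence deleting the $\frac{n}{2} L_1(p,q)$ unpredicted vertices from $G$ yields $n^* - \frac{n}{2} L_1(p,q)$ as a lower bound on the maximum matching size of $G'$, which is at most $\hat{n}$; rearranging gives $n^* \le \hat{n} + \frac{n}{2} L_1(p,q)$. The argument is short, and the only point that needs a little care is the embedding step of the second paragraph, namely checking that removing the unpredicted vertices leaves at most $\hat{c}(t)$ vertices of each type $t$ so that the type-preserving injection into $\hat{G}$ exists; this is immediate from the definition of ``unpredicted'', so I do not anticipate a genuine obstacle.
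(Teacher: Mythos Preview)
Your proof is correct and follows essentially the same approach as the paper: both arguments count the unpredicted vertices as $\frac{n}{2}L_1(p,q)$ and use that the predicted online vertices embed into $\hat{G}$, so any matching among them has size at most~$\hat{n}$. The paper phrases this slightly more directly by splitting a fixed optimal matching $M^*$ into its predicted and unpredicted matched vertices (bounding these by $\hat{n}$ and $\frac{n}{2}L_1(p,q)$ respectively), whereas you detour through the maximum matching of the deleted graph~$G'$; the content is the same.
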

\begin{proof}
Consider the optimal matching $M^*$.
Let $n^*_1$ be the number of predicted vertices matched in $M^*$
and $n^*_2$ be the number of unpredicted vertices matched in $M^*$.
Clearly, $n^*_1\le \hat{n}$, as $\hat{n}$ is the size of a maximum matching in
$\hat{G}$.
Furthermore $n^*_2$ is bounded by the number of unpredicted vertices, which is
equal to $\frac{n}{2}L_1(p,q)$. As $n^*=n^*_1+n^*_2$, the lemma follows.
\end{proof}

Furthermore, if we run \textsc{Baseline} instead of \textsc{Mimic} after we determine $\hat{L}_1(p,q)$, the following lemma gives the expected number of optimal matches in the remaining input. Afterwards, we use this to give a lower bound on the expected total number of matches we obtain after we switch to run \textsc{Baseline} for the remaining input.

\begin{lemma}\label{lemma:whp}
 Let $\alpha$ be an arbitrary constant in the range $(0,1]$,
 $\varepsilon \in (0,\alpha \frac{1-\beta}{1+\beta}]$ be another constant,
 $\delta' = \frac{1}{\log\log\log n}$, and sample size $k = s_1 + s_2 \leq s_{n,\varepsilon,\delta'} \cdot \left( 1 + \sqrt{\log(n+1}) \right)$.
 Let $k'\le k$ be the number of vertex arrivals during the sampling phase.
Then the number of matches made by the optimal matching among the remaining $n-k' \ge n-k$ arrivals is at least $\left(\frac{n-k}{n} - o(1)\right) \cdot n^*$ with probability $1 - o(1)$.
\end{lemma}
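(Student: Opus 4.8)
The plan is to fix a maximum matching $M^*$ in $G$ (so $|M^*| = n^*$; if $n^* = 0$ there is nothing to prove, so assume $n^* \ge 1$), let $S \subseteq V$ be the set of its online endpoints (so $|S| = n^*$), and reduce the claim to a concentration statement about how many vertices of $S$ land in the sampling prefix. Writing $\pi(1), \ldots, \pi(n)$ for the uniformly random arrival order, the sampling phase consumes exactly the first $k'$ arrivals $\pi(1), \ldots, \pi(k')$, so the number of $M^*$-edges incident to one of the remaining $n - k' \ge n - k$ arrivals equals $n^* - X$, where $X := |S \cap \{\pi(1), \ldots, \pi(k')\}|$. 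Thus it suffices to show that $X \le \big(\tfrac{k}{n} + o(1)\big)n^*$ with probability $1 - o(1)$, which I would prove by a hypergeometric concentration argument.

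The first step is to observe that $k'$, the number of genuine arrivals consumed in the while loop, is determined by the loop's coin flips alone: in each of the $k$ iterations, whether a new arrival is consumed or a previously seen vertex is re-sampled depends only on the coin and on the counter $i$, so $(k,k')$ is independent of $\pi$, and the with-replacement re-sampling choices do not affect which vertices arrive. Consequently, conditioning on any admissible value $(k,k') = (k_0, j_0)$ with $j_0 \le k_0 \le L(n) := s_{n,\varepsilon,\delta'}\,(1+\sqrt{\log(n+1)})$ (the sample size limit), the set $\{\pi(1),\ldots,\pi(j_0)\}$ is a uniformly random $j_0$-subset of $V$, so $X$ is hypergeometric with
\[
\mathbb{E}[X \mid k,k'] = \tfrac{k'}{n}\,n^* \le \tfrac{k}{n}\,n^*,
\qquad
\operatorname{Var}(X \mid k,k') = \tfrac{k'}{n}\,n^*\cdot\tfrac{n-n^*}{n}\cdot\tfrac{n-k'}{n-1} \le \mathbb{E}[X \mid k,k'] \le \tfrac{L(n)}{n}\,n^*.
\]

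Next I would substitute $\delta' = 1/\log\log\log n$ (the value used by the algorithm) and the constancy of $\varepsilon$ into $s_{n,\varepsilon,\delta'} = \Theta\!\big(\tfrac{(n+1)\log(1/\delta')}{\varepsilon^2 \log(n+1)}\big)$ to get $L(n) = \Theta\!\big(\tfrac{n\,\log\log\log\log n}{\sqrt{\log n}}\big)$, so $L(n)/n = o(1)$. Applying Chebyshev's inequality with deviation $n^*\cdot (L(n)/n)^{1/4}$ then gives, conditioned on any admissible $(k,k')$,
\[
\Pr\!\Big[\,X \ge \tfrac{k}{n}\,n^* + n^*\,(L(n)/n)^{1/4}\,\Big]
\;\le\; \frac{L(n)\,n^*/n}{\big(n^*\,(L(n)/n)^{1/4}\big)^2}
\;=\; \frac{(L(n)/n)^{1/2}}{n^*}
\;\le\; (L(n)/n)^{1/2}
\;=\; o(1),
\]
using $n^* \ge 1$. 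On the complementary event, $n^* - X > n^*\big(1 - \tfrac{k}{n} - (L(n)/n)^{1/4}\big) = \big(\tfrac{n-k}{n} - o(1)\big)n^*$, and since $(L(n)/n)^{1/4}$ depends only on $n$ and the bound holds for every admissible $(k,k')$, it holds unconditionally on the event $k \le L(n)$ that the lemma assumes, which proves the claim.

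The step I expect to be the main obstacle — and the reason one cannot simply delete the assumption $n^* = n$ of Choo et al.\ — is obtaining concentration when the additive slack we can afford, $o(1)\cdot n^*$, may be only a vanishing fraction of an $n^*$ that could itself be as small as a constant, while the sample size $k$ is not dramatically below $n$ (we only have $k/n = O(\log\log\log\log n/\sqrt{\log n})$). What makes it work is that both $\mathbb{E}[X\mid k,k']$ and $\operatorname{Var}(X\mid k,k')$ scale linearly in $n^*$, so the relative error $(L(n)/n)^{1/4}$ only needs to beat $\sqrt{k/n}$, which is achievable with a $o(1)$ bound precisely because $k/n = o(1)$; any slowly vanishing choice (e.g.\ $1/\log\log n$) would serve equally well. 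A secondary point requiring care is disentangling the sources of randomness so that conditioning on $(k,k')$ leaves $\pi$ uniform, which is what licenses the hypergeometric computation.
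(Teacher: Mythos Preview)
Your proof is correct and takes a genuinely different route from the paper. The paper bounds the number of $M^*$-endpoints that appear \emph{among the $k$ with-replacement samples} (an overcount of the number appearing among the first $k'$ arrivals), treats these as a sum of i.i.d.\ Bernoulli$(n^*/n)$ indicators, and then splits into two regimes: for $n^*\ge \sqrt{\log(n+1)}/\log\log\log n$ it applies a multiplicative Chernoff bound with deviation factor $1+\log\log n$, and for smaller $n^*$ it shows directly via Bernoulli's inequality that $\Pr[O_1\ge 1]=o(1)$. You instead work with the actual arrival prefix, observe that $k'$ is independent of $\pi$ so that $X$ is hypergeometric given $(k,k')$, and use a single Chebyshev bound with the deterministic slack $(L(n)/n)^{1/4}\cdot n^*$. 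Your argument is cleaner: it avoids the case split, needs only the variance bound $\operatorname{Var}(X)\le \mathbb{E}[X]$, and handles all $n^*\ge 1$ uniformly. The paper's route yields a sharper tail (exponentially small in the large-$n^*$ case) but this is not needed for the lemma as stated. Your explicit decoupling of $(k,k')$ from $\pi$ is also a point the paper glosses over when it passes from ``sampling with replacement'' to i.i.d.\ Bernoullis.
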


\begin{proof}
Recall that $s_{n,\varepsilon,\delta'} = \Theta\left(\frac{(n+1)\log(1/\delta')}{\varepsilon^2 \log(n+1)}\right)$.
Fix an arbitrary optimal matching~$M^*$ of size $n^*$.
The algorithm samples $k$ vertices with replacement, so the number of vertex arrivals during the
sampling phase is some $k'\le k$, and the number of remaining vertices is $n-k'\ge n-k$. 
Let $O_1$ be the number of vertices sampled in the sampling phase that are matched
in $M^*$, and let $O_2$ be the number of vertices among the remaining vertices that
are matched in $M^*$. We aim to show that
$O_2 \ge \left(\frac{n-k}{n} - o(1)\right) \cdot n^*=\frac{n-k}{n}n^*-o(n^*)$ with probability $1 - o(1)$.
Note that $O_2 \ge n^* - O_1$.
Therefore, it suffices to show that
$O_1 > \frac{k}{n} n^* + o(n^*)$ happens only with probability $o(1)$.

The sampling phase samples $k$ online vertices with replacement.
Each sampled vertex has a probability of $n^*/n$ to be a vertex matched in~$M^*$.

Let $X_i$ for $i = 1,\ldots, k$ be independent Bernoulli random variables, where $X_i = 1$ if the $i$-th sampled vertex
is matched in $M^*$, and $X_i = 0$ otherwise.
Then $O_1= \sum_{i=1}^{k} X_i$, and
$E[O_1] = E[\sum_{i=1}^{k} X_i ] = \sum_{i=1}^k E[X_i] = \frac{k}{n}n^*$.
We can assume that $k=s_{n,\varepsilon,\delta'} \cdot \left( 1 + \sqrt{\log(n+1}) \right)$
in the remainder of the proof as $O_1$ is maximized (and $O_2$ minimized), in a stochastic
sense, when the sampling phase is as long as possible.
Thus $E[O_1]=\frac{k}{n}n^* = \Theta\left(\frac{(n+1)\log(1/\delta')}{\varepsilon^2 \log(n+1)}\right)\cdot (1+\sqrt{\log(n+1)})\cdot\frac{n^*}{n}
= \Theta\left(\frac{\log(1/\delta')}{\varepsilon^2 \log(n+1)}\right)\cdot (1+\sqrt{\log(n+1)})\cdot n^*$.

We consider two cases based on the value of~$n^*$:

{\bf Case 1:} $n^* \geq \frac{\sqrt{\log (n+1)}}{\log\log\log n}$

We have: \begin{eqnarray*}
E[O_1] & \ge & \Theta\left(\frac{\log(1/\delta')}{\varepsilon^2 \log(n+1)}\right)\cdot (1+\sqrt{\log(n+1)})\cdot \frac{\sqrt{\log (n+1)}}{\log\log\log n}\\
&\geq& \Theta\left(\frac{(1+\beta)^2 \log\log\log\log n}{{\alpha}^2 (1-\beta)^2 \log\log\log n} \left(1 + \frac{1}{\sqrt{\log(n+1)}} \right)\right) \\
&\geq& \frac{1}{\mathcal{O}(\log\log\log n)}
\end{eqnarray*}

A well-known Chernoff bound is
$\text{Pr}[X \geq (1+\delta)E[X]] \leq \left(\frac{e^\delta}{(1+\delta)^{(1+\delta)}} \right)^{E[X]} \leq 2^{-(1+\delta)E[X]}$ when $\delta > 2e - 1$
\cite[Exercise 4.1]{Motwani_Raghavan_1995}. Applying this bound to $O_1$, we get:
\begin{eqnarray*}
\text{Pr}[O_1 \geq (1+\log\log n)E[O_1]] &\leq& 2^{-(1+\log\log n)E[X]}\\
&\leq& 2^{-\frac{1+\log\log n}{\mathcal{O}(\log\log\log n)}} = o(1)
\end{eqnarray*}
As $(\log\log n)E[O_1]
= (\log\log n)\cdot \Theta\left(\frac{\log(1/\delta')}{\varepsilon^2 \log(n+1)}\right)\cdot (1+\sqrt{\log(n+1)})\cdot n^*\\
= \Theta(\frac{(\log\log n)(\log\log\log\log n)\sqrt{\log(n+1)}}{\log(n+1)}n^*)=o(n^*)$,
it follows that $\text{Pr}[O_2\ge \frac{n-k}{n}n^* - o(n^*)]=1-o(1)$, as desired. 

{\bf Case 2:} $n^* < \frac{\sqrt{\log (n+1)}}{\log\log\log n}$

We show that the probability that at least one element in the sample of size $k$
is matched in $M^*$ is $o(1)$. 
\begin{eqnarray*}
    \text{Pr}[O_1 \geq 1] &=& 1 - \text{Pr}[O_1 = 0]\\
    &=& 1 - \left( 1 - \frac{n^*}{n} \right)^{\Theta\left(\frac{(n+1) \cdot \log(1/\delta')}{\varepsilon^2} \left(\frac{1}{\log(n+1)} + \frac{1}{\sqrt{\log(n+1)}} \right)\right)}\\
    &\leq& 1 - \left( 1 - \frac{\sqrt{\log (n+1)}}{(\log\log\log n) \cdot n} \right)^{\Theta\left(\frac{(n+1) \cdot \log(1/\delta')}{\varepsilon^2} \left(\frac{1}{\log(n+1)} + \frac{1}{\sqrt{\log(n+1)}} \right)\right)}
\end{eqnarray*}

\sloppypar{Recall the following version of Bernoulli's inequality: $(1+x)^{r}\geq 1+rx$ for every real number $r\geq 1$ and $x\geq -1$. For sufficiently large $n$, observe that setting $x = - \frac{\sqrt{\log (n+1)}}{(\log\log\log n) \cdot n}$ and setting $r = \Theta\left(\frac{(n+1) \cdot \log(1/\delta')}{\varepsilon^2} \left(\frac{1}{\log(n+1)} + \frac{1}{\sqrt{\log(n+1)}} \right)\right)$ satisfy the required conditions (recalling also
that $\varepsilon$ is a constant and $\delta'=\frac{1}{\log\log\log n}$). Thus, we use the inequality to continue the calculation as follows.
}
\begin{eqnarray*}
     \lefteqn{1 - \left( 1 - \frac{\sqrt{\log (n+1)}}{(\log\log\log n) \cdot n} \right)^{\Theta\left(\frac{(n+1) \cdot \log(1/\delta')}{\varepsilon^2} \left(\frac{1}{\log(n+1)} + \frac{1}{\sqrt{\log(n+1)}} \right)\right)}}\\
     &\leq& 1 - \left(1 - \Theta\left(\frac{\log(1/\delta')}{\varepsilon^2 \log\log\log n} \left(1+\frac{1}{\sqrt{\log(n+1)}} \right)\right)\right) \\
     &=& \Theta\left(\frac{\log(1/\delta')}{\varepsilon^2 \log\log\log n} \left(1+\frac{1}{\sqrt{\log(n+1)}} \right)\right)\\
     &=& \Theta\left(\frac{\log\log\log\log n}{\log\log\log n} \left(1 + \frac{1}{\sqrt{\log(n+1)}} \right)\right)\\
    &=& o(1)
\end{eqnarray*}
Therefore, $\text{Pr}[O_2 \ge \frac{n-k}{n} \cdot n^*] \geq 1 - o(1)$.
\end{proof}

\begin{corollary}
\label{cor:matching-size-switching-baseline}
    Assume that the algorithm has matched $j$ vertices during the sampling phase
    and then switches to \textsc{Baseline} for the remaining input.
    The total expected number of matches made by the algorithm is
    at least $\beta \frac{n-k}{n} \cdot n^* - o(n^*)$,
    where $\beta$ is the competitive ratio of \textsc{Baseline} and $k$ is the number of arrivals before we switch to \textsc{Baseline}.
\end{corollary}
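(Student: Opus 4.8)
The plan is to combine the $\beta$-competitiveness of \textsc{Baseline} with the bound of Lemma~\ref{lemma:whp}. Fix an optimal matching $M^*$ of size $n^*$, and condition on the transcript of the sampling phase: its length $k$ (in arrivals), the set of $j$ offline vertices consumed by the $j$ matches \textsc{Mimic} makes during it, and the set of $n-k$ online vertices that remain. Writing $O_2$ for the number of these remaining online vertices that are matched in $M^*$, Lemma~\ref{lemma:whp} tells us that $O_2 \ge \bigl(\frac{n-k}{n}-o(1)\bigr)n^*$ with probability $1-o(1)$ (using that the sampling phase has $o(n)$ arrivals, so the discrepancy between the sample size and the number of arrivals is absorbed into the $o(1)$ term).

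Next I would bound what \textsc{Baseline} achieves on the residual instance, i.e.\ on the bipartite graph consisting of the $n-j$ still-available offline vertices together with the remaining online vertices, with edges to already-matched offline vertices discarded. Its optimal matching has size at least $O_2-j$: restricting $M^*$ to the remaining online vertices yields a matching of size $O_2$, and deleting the at most $j$ of its edges that are incident to a consumed offline vertex leaves a valid matching in the residual graph. Conditioned on the sampling transcript, the remaining online vertices arrive in uniformly random order --- the coin flips in the sampling loop of Algorithm~\ref{alg:LA} depend only on the arrival counter $i$, not on which vertices have arrived, so the suffix of the (uniformly random) arrival permutation is still a uniformly random permutation of the leftover vertices. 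Hence the definition of the competitive ratio, applied to the residual graph, gives that \textsc{Baseline} makes at least $\beta\cdot\max(O_2-j,0)$ matches in expectation (over its own randomness and the residual order). Adding the $j$ matches from the sampling phase, the expected total number of matches conditioned on the transcript is at least $j+\beta\cdot\max(O_2-j,0)\ge\beta O_2$, where the last inequality uses $\beta\le 1$ when $O_2\ge j$ and is immediate (since then $j>O_2\ge\beta O_2$) otherwise.

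Finally I would take expectations. On the $(1-o(1))$-probability event of Lemma~\ref{lemma:whp}, every transcript satisfies $\beta O_2\ge\beta\bigl(\frac{n-k}{n}-o(1)\bigr)n^*$, so the conditional expected total is at least that quantity; on the complementary $o(1)$-probability event we use the trivial lower bound $0$. This yields an unconditional expected total of at least $(1-o(1))\cdot\beta\bigl(\frac{n-k}{n}-o(1)\bigr)n^* = \beta\,\frac{n-k}{n}\,n^* - o(n^*)$, as claimed. The main obstacle I anticipate is the middle step: one has to be careful that \textsc{Baseline}'s competitive guarantee really does transfer to the residual instance, which requires both that the residual arrival order is genuinely uniform given the sampling transcript and that the residual optimum is correctly lower-bounded by $O_2-j$ rather than $O_2$ (accounting for the offline vertices already spent by \textsc{Mimic}); the remainder is routine manipulation of $o(\cdot)$ terms.
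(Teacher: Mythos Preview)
Your proposal is correct and follows essentially the same approach as the paper's proof: invoke Lemma~\ref{lemma:whp} to lower-bound the number $O_2$ of remaining online vertices that are matched in $M^*$, subtract~$j$ to account for offline vertices already consumed by \textsc{Mimic}, apply $\beta$-competitiveness of \textsc{Baseline} to the residual instance, add back the $j$ matches, and simplify. You are more explicit than the paper about why the residual arrival order is uniformly random given the sampling transcript and about the case $O_2<j$ (via the $\max$), but these are refinements of the same argument rather than a different route.
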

\begin{proof}
By Lemma~\ref{lemma:whp}, with probability $1-o(1)$
the remaining input contains a set $V^*$
of at least $\frac{n-k}{n} n^* - o(n^*)$ vertices that are matched
by the optimal matching.
If the algorithm has already made $j$ matches at the point of
switching to \textsc{Baseline}, $j$~of these $\frac{n-k}{n} n^* - o(n^*)$
vertices in the remaining input may no longer be matchable.
At least $\frac{n-k}{n} n^* - j - o(n^*)$ vertices will still
be matchable, however, and therefore \textsc{Baseline} will
make at least $\beta\left(\frac{n-k}{n} n^* - j - o(n^*)\right)$ matches.
Thus, the total expected number of matches made by the algorithm
is at least $j + (1-o(1))\beta\left(\frac{n-k}{n} n^* - j - o(n^*)\right)
\ge \beta \cdot \frac{n-k}{n} \cdot n^* - o(n^*)$.

\end{proof}

\subsection{Choice of Threshold}
\label{subsec:thresold}
Corollary~\ref{cor:matching-size-switching-baseline} shows that the competitive
ratio of our algorithm is at least $\beta-o(1)$ if the sampling phase
processes $k=o(n)$ vertices and we switch to \textsc{Baseline} for the remainder
of the input. In this section, we justify the choice of the threshold
$\tau-\varepsilon$ to ensure that the algorithm is also
$(\beta-o(1))$-competitive if the algorithm decides to run \textsc{Mimic} on
the whole input.

\begin{lemma}
\label{lemma:set-threshold}
    If $L_1(p,q) \leq \tau$, where $\tau =  \frac{2\hat{n}}{n} \cdot \frac{(1 - \beta)}{(1+\beta)}$ and our algorithm runs \textsc{Mimic} on the whole input, then the competitive ratio is at least $\beta$.
\end{lemma}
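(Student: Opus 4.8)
The plan is to express the competitive ratio as a ratio of two quantities we have already bounded: the number of matches made by \textsc{Mimic} (from Lemma~\ref{lemma:mimic-estimated-matching-size}) in the numerator, and the optimal matching size $n^*$ (from Lemma~\ref{lemma:estimated-opt}) in the denominator. By Lemma~\ref{lemma:mimic-estimated-matching-size}, running \textsc{Mimic} on the whole input yields at least $\hat{n}-\frac{n}{2}L_1(p,q)$ matches, and by Lemma~\ref{lemma:estimated-opt}, $n^*\le \hat{n}+\frac{n}{2}L_1(p,q)$. Hence the competitive ratio is at least
\[
\frac{\hat{n}-\frac{n}{2}L_1(p,q)}{\hat{n}+\frac{n}{2}L_1(p,q)}.
\]
So the lemma reduces to the purely algebraic claim that this fraction is at least $\beta$ whenever $L_1(p,q)\le \tau = \frac{2\hat{n}}{n}\cdot\frac{1-\beta}{1+\beta}$.

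To verify the algebraic claim, I would introduce the shorthand $\ell = \frac{n}{2}L_1(p,q)$ so that the bound on $L_1(p,q)$ becomes $\ell \le \hat{n}\cdot\frac{1-\beta}{1+\beta}$, and the fraction becomes $\frac{\hat{n}-\ell}{\hat{n}+\ell}$. Since $\hat{n}+\ell>0$, the inequality $\frac{\hat{n}-\ell}{\hat{n}+\ell}\ge\beta$ is equivalent to $\hat{n}-\ell\ge\beta(\hat{n}+\ell)$, i.e.\ $\hat{n}(1-\beta)\ge\ell(1+\beta)$, i.e.\ $\ell\le\hat{n}\cdot\frac{1-\beta}{1+\beta}$ — which is exactly the hypothesis. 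One small thing to check is that the numerator $\hat{n}-\ell$ is nonnegative under the hypothesis (it is, since $\frac{1-\beta}{1+\beta}<1$), so that the lower bound on the number of \textsc{Mimic} matches is meaningful and the direction of the inequality manipulation is valid; and that $\hat{n}>0$, which follows from $\hat{n}\ge\alpha n$ with $\alpha>0$ (this is the regime in which \textsc{Mimic} is run at all).

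This lemma has essentially no hard part — the only mild subtlety is keeping track of the fact that $L_1(c^*,\hat{c}) = n\cdot L_1(p,q)$, so that the ``$\frac{n}{2}L_1(p,q)$'' appearing in Lemmas~\ref{lemma:mimic-estimated-matching-size} and~\ref{lemma:estimated-opt} is exactly half the number of unpredicted vertices, and matching up the threshold $\tau$ with the resulting inequality. I would present the proof as: quote the two bounds, form the ratio, and then do the one-line equivalence $\frac{\hat{n}-\ell}{\hat{n}+\ell}\ge\beta \iff \ell\le\hat{n}\frac{1-\beta}{1+\beta} \iff L_1(p,q)\le\tau$. The role of this lemma in the larger argument (taken up in Section~\ref{subsec:thresold} and then Section~\ref{subsec:prove-thm}) is that it certifies that the decision threshold $\tau-\varepsilon$ is conservative enough: whenever the estimate $\hat L_1(p,q)$ is below $\tau-\varepsilon$ and the estimate is accurate to within $\varepsilon$, the true distance is below $\tau$, so committing to \textsc{Mimic} cannot drop the ratio below $\beta-o(1)$; the smoothness bound $1-\frac{2L_1(p,q)}{2\alpha+L_1(p,q)}$ then comes from the same ratio $\frac{\hat n-\ell}{\hat n+\ell}$ combined with $\hat n\ge\alpha n$.
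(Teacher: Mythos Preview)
Your proof is correct and follows exactly the same approach as the paper: cite Lemmas~\ref{lemma:mimic-estimated-matching-size} and~\ref{lemma:estimated-opt} to bound the ratio by $\frac{\hat{n}-\frac{n}{2}L_1(p,q)}{\hat{n}+\frac{n}{2}L_1(p,q)}$, then verify algebraically that this is at least $\beta$ precisely when $L_1(p,q)\le\tau$. The paper merely asserts the algebraic step as ``easy to show,'' so your explicit equivalence chain and the positivity checks are a welcome elaboration rather than a departure.
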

\begin{proof}
If we run \textsc{Mimic} on the whole input,
we obtain at least $\hat{n} - \frac{n}{2} L_1(p,q)$ matches by Lemma~\ref{lemma:mimic-estimated-matching-size}.
By Lemma~\ref{lemma:estimated-opt}, the optimal matching has size at most $\hat{n} + \frac{n}{2} (L_1(p,q))$.
Thus, the competitive ratio is at least $\frac{\hat{n} - \frac{n}{2} L_1(p,q) }{\hat{n} + \frac{n}{2} L_1(p,q)}$.
It is easy to show that this ratio is at least $\beta$ provided that
$L_1(p,q)\le \frac{2\hat{n}}{n} \cdot \frac{(1 - \beta)}{(1+\beta)}$.
\end{proof}

If the estimated $L_1$-distance $\hat{L}(p,q)$ is at most
$\tau-\varepsilon$ and the estimation error is at most $\varepsilon$,
it follows that $L(p,q)\le \tau$. Thus, we obtain the following corollary.

\begin{corollary}
If $\hat{L}_1(p,q) \leq \tau - \varepsilon$ and $\hat{L}_1(p,q)$ satisfies the property $|\hat{L}_1(p,q)-L_1(p,q)| \leq \varepsilon$, then our algorithm
runs \textsc{Mimic} on the whole input and has competitive ratio at least $\beta$.
\end{corollary}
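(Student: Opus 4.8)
The plan is to reduce the claim directly to Lemma~\ref{lemma:set-threshold} via a one-line triangle-inequality argument, after first checking that the hypothesis pins down which branch of Algorithm~\ref{alg:LA} is executed. Since the statement refers to the estimate $\hat{L}_1(p,q)$ returned by the call to the algorithm of Theorem~\ref{thm:estimated-L1}, we are necessarily in the execution where $\hat{n}/n \ge \alpha$ (otherwise \textsc{Baseline} is run immediately and no estimate is computed) and where the drawn sample size $s_1+s_2$ did not exceed the sample size limit (otherwise \textsc{Baseline} is again run before any estimate is produced). In that execution, every online vertex that arrives during the sampling phase is also processed by \textsc{Mimic}, and since the test $\hat{L}_1(p,q)\le\tau-\varepsilon$ holds by assumption, \textsc{Mimic} is moreover run on all remaining arrivals. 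Hence \textsc{Mimic} is run on the whole input sequence, which is the first assertion of the corollary.

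For the competitive-ratio bound, I would invoke the accuracy guarantee $|\hat{L}_1(p,q)-L_1(p,q)|\le\varepsilon$ to obtain $L_1(p,q)\le \hat{L}_1(p,q)+\varepsilon$, and then combine it with $\hat{L}_1(p,q)\le\tau-\varepsilon$ to conclude $L_1(p,q)\le\tau$. With this inequality in hand and \textsc{Mimic} run on the entire input, Lemma~\ref{lemma:set-threshold} immediately yields that the competitive ratio is at least $\beta$. (Unpacking that lemma, this is where Lemmas~\ref{lemma:mimic-estimated-matching-size} and~\ref{lemma:estimated-opt} are used: they bound the \textsc{Mimic} matching size from below by $\hat{n}-\frac{n}{2}L_1(p,q)$ and $n^*$ from above by $\hat{n}+\frac{n}{2}L_1(p,q)$, and the ratio of these two quantities is at least $\beta$ precisely when $L_1(p,q)\le\tau$.)

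There is no real obstacle here: the argument is a triangle inequality followed by an appeal to an already-proven lemma. The only points that deserve a sentence of care are that the hypothesis implicitly presupposes that the execution actually reaches the $L_1$-estimation step --- so that neither the $\hat{n}/n<\alpha$ branch nor the oversized-sample branch applies --- and that the sampling phase does not break the invariant that \textsc{Mimic} sees every arriving online vertex, so that ``run \textsc{Mimic} on the remaining input after the test'' together with the sampling phase genuinely covers the whole sequence.
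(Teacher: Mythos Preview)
Your proposal is correct and matches the paper's own reasoning: the paper also derives $L_1(p,q)\le\tau$ from $\hat{L}_1(p,q)\le\tau-\varepsilon$ together with the accuracy guarantee, and then invokes Lemma~\ref{lemma:set-threshold}. Your additional remarks about which branch of Algorithm~\ref{alg:LA} is taken, and the fact that \textsc{Mimic} processes every arrival (including those during the sampling phase), make explicit what the paper leaves implicit but do not change the argument.
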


\subsection{Bounding the Probability of Sampling Failures}
\label{subsec:succeed-rate}

In Algorithm~\ref{alg:LA}, two cases may prevent the algorithm from carrying out
the sampling phase and obtaining a good estimate of $L_1(p,q)$: First, the sum $s_1+s_2$ of two numbers drawn from a Poisson distribution might
exceed $s_{n,\varepsilon,\delta'} \cdot \left( 1 + \sqrt{\log(n + 1)} \right)$. In this case, we run \textsc{Baseline} for the whole input.
Second, the estimated value of $L_1(p,q)$, obtained via the algorithm of Theorem~\ref{thm:estimated-L1}, may not satisfy
the property $|\hat{L}_1(p,q)-L_1(p,q)| \leq \varepsilon$. This can cause the algorithm to make an incorrect decision regarding
whether to run \textsc{Mimic} or \textsc{Baseline} on the remaining input.
We show that the probability for these cases to happen is $o(1)$, thus reducing the competitive ratio only by $o(1)$.
First, the following lemma, which can be shown by applying standard tail bounds of the Poisson distribution,
bounds the probability that the sample size $s_1+s_2$ is too large.

\begin{lemma}
\label{lemma:probability-sim-poisson}
    Let $\alpha$ be any constant in the range $(0,1]$,
    $\varepsilon \in (0,\alpha \frac{1-\beta}{1+\beta}]$,
    $\delta' = \frac{1}{\log\log\log n}$, and $s_{n,\varepsilon,\delta'} = \Theta \left(\frac{(n+1) \cdot \log(1/\delta')}{\varepsilon^2 \cdot \log(n+1)}\right)$. 
    The probability that two numbers $s_1,s_2$ drawn independently from a
    Poisson distribution with parameter $s_{n,\varepsilon,\delta'}/2$ satisfy
    $s_1 + s_2> s_{n,\varepsilon,\delta'} \cdot \left( 1 + \sqrt{\log(n + 1)} \right)$ is at most
    $\delta_{poi} = \mathcal{O} \left( \frac{1}{\poly(n+1)} \right)$.
\end{lemma}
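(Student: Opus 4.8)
The plan is to use the fact that a sum of independent Poisson random variables is again Poisson, and then apply a standard Chernoff-type upper-tail bound for the Poisson distribution. Since $s_1$ and $s_2$ are independent with $s_i\sim\mathrm{Poisson}(s_{n,\varepsilon,\delta'}/2)$, their sum $X:=s_1+s_2$ is distributed as $\mathrm{Poisson}(\lambda)$ with $\lambda:=s_{n,\varepsilon,\delta'}$. Setting $\delta:=\sqrt{\log(n+1)}$, the probability we must bound is exactly $\Pr[X>(1+\delta)\lambda]$, i.e., the probability that a Poisson variable exceeds its mean by the multiplicative factor $1+\delta$.

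Next I would apply the standard bound $\Pr[X\ge(1+\delta)\lambda]\le\left(e^\delta/(1+\delta)^{1+\delta}\right)^{\lambda}=\exp\!\big(-\lambda\big((1+\delta)\ln(1+\delta)-\delta\big)\big)$, which holds for every $\delta>0$ when $X\sim\mathrm{Poisson}(\lambda)$ (a cruder bound such as $\exp(-\lambda\delta^2/(2(1+\delta)))$ would also be enough). Taking logarithms, it suffices to prove that the exponent $\lambda\big((1+\delta)\ln(1+\delta)-\delta\big)$ is $\omega(\log(n+1))$: then the tail probability is $\exp(-\omega(\log(n+1)))$, which is smaller than $1/(n+1)^c$ for every constant $c$ and hence certainly $\mathcal{O}(1/\poly(n+1))$, as claimed.

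It then remains to plug in the asymptotics. Because $\delta=\sqrt{\log(n+1)}\to\infty$, we have $(1+\delta)\ln(1+\delta)-\delta=\Theta(\delta\ln\delta)=\Theta\!\big(\sqrt{\log(n+1)}\cdot\log\log(n+1)\big)$. Since $\varepsilon$ is a constant and $\delta'=\frac{1}{\log\log\log n}$, we have $\log(1/\delta')=\log\log\log\log n\ge 1$ for large $n$, so $\lambda=\Theta\!\big(\tfrac{(n+1)\log(1/\delta')}{\varepsilon^2\log(n+1)}\big)=\Omega\!\big(\tfrac{n+1}{\log(n+1)}\big)$. Multiplying the two estimates, $\lambda\big((1+\delta)\ln(1+\delta)-\delta\big)=\Omega\!\big(\tfrac{(n+1)\log\log(n+1)}{\sqrt{\log(n+1)}}\big)$, which grows faster than any power of $\log(n+1)$ and in particular is $\omega(\log(n+1))$, completing the bound.

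I do not expect a genuine obstacle here; the only thing to be careful about is the bookkeeping of the nested logarithms and, in particular, verifying that the $n$-dependence of $\delta'$ does not shrink $\lambda$ enough to matter — it does not, since the $\tfrac{n+1}{\log(n+1)}$ factor in $\lambda$ dominates all the logarithmic terms. One should also note that $\lambda\to\infty$, so the Poisson tail bound is being applied in the regime where it is meaningful, and that the event of interest is a one-sided (upper) deviation, so no union bound over two tails is needed.
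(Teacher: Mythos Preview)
Your proposal is correct and actually slightly cleaner than the paper's argument. The paper does not exploit the closure of Poisson under independent sums; instead it argues that if $s_1+s_2$ exceeds the limit then at least one of $s_1,s_2$ exceeds half the limit, applies the two-sided tail bound $\Pr[|X-m|\ge x]\le 2\exp\!\big(-\tfrac{x^2}{2(m+x)}\big)$ to a single $s_i$ (this is precisely the ``cruder bound'' you mention, with $x=\delta m$), and then takes a union bound over $i\in\{1,2\}$. Your route avoids the union-bound detour by working directly with $X=s_1+s_2\sim\mathrm{Poisson}(\lambda)$ and applying the sharper multiplicative Chernoff bound once; the resulting exponent $\Omega\!\big(\tfrac{(n+1)\log\log(n+1)}{\sqrt{\log(n+1)}}\big)$ is even stronger than needed, and in particular still $\omega(\log(n+1))$, yielding the $\mathcal{O}(1/\poly(n+1))$ conclusion. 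Both approaches hinge on the same asymptotic check that $\lambda=\Omega\!\big(\tfrac{n+1}{\log(n+1)}\big)$ dominates the logarithmic factors, which you verify correctly.
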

\begin{proof}
    For a Poisson distribution with parameter~$m>0$, and for any $x > 0$, the following tail bound holds (see, e.g., \cite{Canonne2019}):
    \begin{equation}
    \label{eq:poisson-tail-bound}
        \text{Pr}[|X-m| \geq x] \leq 2 \cdot \exp \left(-\frac{x^2}{2(m+x)} \right)
    \end{equation}
    
    As per Theorem~\ref{thm:estimated-L1}, $s_1$ and $s_2$ are drawn from a Poission
    distribution with parameter $s_{n,\varepsilon,\delta'}/2$ where
    $s_{n,\varepsilon,\delta'} = \Theta \left( \frac{(n+1) \cdot \log(1/\delta')}{\varepsilon^2 \cdot \log(n+1)} \right)$.
    The sample size limit is set to $s_{n,\varepsilon,\delta'}(1+\sqrt{\log(n+1)})$.
    If $s_1+s_2$ exceeds the sample size limit, then at least one of the two
    values must exceed half the sample size limit. Using (\ref{eq:poisson-tail-bound}), we can bound the probability $p_1$ that $s_1$
    exceeds half the sample size limit as follows:
\begin{eqnarray*}
        p_1&=&\Pr \left[ s_1 > \frac12 s_{n,\varepsilon,\delta'} (1 + \sqrt{\log(n+1))} \right] \\
        &\le& \Pr \left[ |s_1 - s_{n,\varepsilon,\delta'}/2| > \frac12 s_{n,\varepsilon,\delta'} \sqrt{\log(n+1)} \right] \\
        &\leq& 2 \cdot \mathbf{exp} \left({- \frac{\frac14 s_{n,\varepsilon,\delta'}^2 \log(n+1)}{2 \left( \frac12 s_{n,\varepsilon,\delta'}+\frac12 s_{n,\varepsilon,\delta'}\sqrt{\log(n+1)} \right)}} \right) \\
        &=& \mathcal{O}\left( (n+1) ^{- \frac{\frac14 s_{n,\varepsilon,\delta'}}{1+\sqrt{\log(n+1)} }} \right) 
\end{eqnarray*}

As $s_{n,\varepsilon,\delta'} = \Theta\left(\frac{(n+1) \cdot \log(1/\delta')}{\varepsilon^2 \cdot \log(n+1)}\right)$,
we have $s_{n,\varepsilon,\delta'} \geq c_1 \left(\frac{(n+1) \cdot \log(1/\delta')}{\varepsilon^2 \cdot \log(n+1)}\right)$
for some constant $c_1>0$ for all large enough~$n$.
Furthermore, we have set $\delta' = \frac{1}{\log\log\log n}$
and $\varepsilon \le \alpha \frac{1-\beta}{1+\beta}$.
Thus, we can continue the calculation as follows:
  \begin{eqnarray*}
        p_1 &\leq& \mathcal{O} \left( (n+1) ^{- \frac{\frac{c_1}{4}(n+1) \cdot \log(1/\delta')}{\varepsilon^2 \cdot \log(n+1) \cdot \left( 1+\sqrt{\log(n+1)} \right) }} \right)\\
        &\leq& \mathcal{O} \left( (n+1)^{- \frac{\frac{c_1}{4}(1+\beta)^2(n+1)\log\log\log\log n}{{\alpha}^2 (1-\beta)^2(\log(n+1) + (\log(n+1))^{(3/2)})}} \right)\\
        &=& \mathcal{O}\left( \frac{1}{\poly(n+1)} \right)
  \end{eqnarray*}
This final step holds because $(n + 1)\log\log\log\log n$ is $\omega(\log(n+1) + (\log(n+1))^{\frac{3}{2}})$,
and thus $p_1$ is at most $(n+1)^{-c_2}$ for some constant $c_2$ for all large enough~$n$.

    The probability that at least one of $s_1$ and $s_2$ exceeds half the sample size limit is at most $2p_1$,
    and this probability is an upper bound on $\delta_{poi}$. Therefore,
    $\delta_{poi} = \mathcal{O}\left( \frac{1}{\poly(n+1)} \right)$.
    
\end{proof}

\begin{lemma}
    \label{lemma:succeeds-rate}
    The probability that the two following events both occur is at least $1-\delta$, where $\delta =  \delta_{poi} + \delta'$. First, the sum of the two numbers $s_1,s_2$, drawn independently from a Poisson distribution with parameter $s_{n,\varepsilon,\delta'}/2$, is at most $s_{n,\varepsilon,\delta'} \cdot \left( 1 + \sqrt{\log(n + 1)} \right)$. Second, the estimate $\hat{L}_1(p,q)$ computed by the algorithm of Theorem~\ref{thm:estimated-L1} satisfies $|\hat{L}_1(p,q)-L_1(p,q)| \leq \varepsilon$.
\end{lemma}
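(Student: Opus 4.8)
The plan is a union bound over the two ways the sampling step can fail. First I would name the relevant events: let $E_1$ be the event that $s_1+s_2 \le s_{n,\varepsilon,\delta'}\cdot(1+\sqrt{\log(n+1)})$, i.e.\ the drawn sample size does not exceed the sample size limit, so that the algorithm actually performs the sampling phase rather than running \textsc{Baseline} from the start; and let $E_2$ be the event that the estimate $\hat{L}_1(p,q)$ produced by the algorithm of Theorem~\ref{thm:estimated-L1} on the drawn sample satisfies $|\hat{L}_1(p,q)-L_1(p,q)|\le\varepsilon$. The claim is exactly that $\Pr[E_1\cap E_2]\ge 1-\delta$ with $\delta=\delta_{poi}+\delta'$.

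For $E_1$, Lemma~\ref{lemma:probability-sim-poisson} applies verbatim with the same parameter choices ($\delta'=\frac{1}{\log\log\log n}$, $\varepsilon\le\alpha\frac{1-\beta}{1+\beta}$, and $s_{n,\varepsilon,\delta'}=\Theta(\frac{(n+1)\log(1/\delta')}{\varepsilon^2\log(n+1)})$) and gives $\Pr[\overline{E_1}]\le\delta_{poi}=\mathcal{O}(1/\poly(n+1))$. For $E_2$, I would invoke Theorem~\ref{thm:estimated-L1} with reference distribution $q=\hat{c}/n$, domain size $r=n+1$ (the $\hat r$ predicted types, one dummy type for the types occurring only in the true sequence, and $n-\hat r$ further types, exactly as set up for Algorithm~\ref{alg:LA}), additive error $\varepsilon$, and failure probability $\delta'$: the chosen expected sample size $s_{n,\varepsilon,\delta'}=\Theta(\frac{(n+1)\log(1/\delta')}{\varepsilon^2\log(n+1)})$ has precisely the form $\Theta(\frac{r\log(1/\delta')}{\varepsilon^2\log r})$ required by the theorem, and the sample is obtained as $s_1+s_2$ i.i.d.\ draws from $p=c^*/n$ with $s_1,s_2\sim\mathrm{Poisson}(s_{n,\varepsilon,\delta'}/2)$ (the with-replacement sampling implemented in lines~10--19), which is exactly the theorem's sampling model. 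Hence $\Pr[\overline{E_2}]\le\delta'$, with the probability taken over both the Poisson draws and the sampled vertices. A union bound then yields $\Pr[\overline{E_1}\cup\overline{E_2}]\le\delta_{poi}+\delta'=\delta$, i.e.\ $\Pr[E_1\cap E_2]\ge 1-\delta$, which is the statement.

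There is essentially no hard step here; the only subtlety worth spelling out is that $E_1$ and $E_2$ are not independent, since both depend on $s_1$ and $s_2$. This causes no difficulty, because a union bound requires no independence, and because Theorem~\ref{thm:estimated-L1} already states its guarantee over all of its internal randomness (including the Poisson draws), so the bound $\Pr[\overline{E_2}]\le\delta'$ holds unconditionally. One should also note that in the actual execution the estimation algorithm is run only when $E_1$ holds; defining $E_2$ in terms of the estimate that algorithm would compute on the drawn sample makes $\Pr[\overline{E_2}]\le\delta'$ meaningful even on the low-probability event $\overline{E_1}$, which is all the union bound needs.
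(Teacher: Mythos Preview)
Your proof is correct and follows exactly the paper's approach: bound $\Pr[\overline{E_1}]$ via Lemma~\ref{lemma:probability-sim-poisson}, bound $\Pr[\overline{E_2}]$ via Theorem~\ref{thm:estimated-L1}, and apply a union bound. Your additional remarks about the non-independence of $E_1$ and $E_2$ and about defining $E_2$ even on $\overline{E_1}$ are sound clarifications that the paper omits, but the core argument is identical.
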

\begin{proof}
    By Lemma~\ref{lemma:probability-sim-poisson}, the probability that the first
    event does not occur is at most $\delta_{poi}$.
    By Theorem~\ref{thm:estimated-L1}, the probability that the second event
    does not occur is at most $\delta'$. By the union bound, the probability
    that at least one of the two events does not occur is at most
    $\delta_{poi}+\delta'$. Thus, the probability that both events occur
    is at least $1-(\delta_{poi}+\delta')$.
\end{proof}

\subsection{Proof of Theorem~\ref{thm:competitive-ratio}}
\label{subsec:prove-thm}
We are now ready to prove Theorem~\ref{thm:competitive-ratio}.

\begin{proof}[Proof of Theorem~\ref{thm:competitive-ratio}]
We distinguish three cases regarding the relationship between $L_1(p,q)$ and $\tau$ as follows.

{\bf Case 1:} $L_1(p,q) \leq \tau - 2\varepsilon$ 

In this case, if the estimate $\hat{L}_1(p,q)$ satisfies the property $|\hat{L}_1(p,q) - L_1(p,q)|\le\varepsilon$, our
algorithm continues to run \textsc{Mimic} for the whole input as $\hat{L}_1(p,q) \leq L_1(p,q) + \varepsilon \leq (\tau - 2\varepsilon) + \varepsilon = \tau - \varepsilon$. By Lemma~\ref{lemma:mimic-estimated-matching-size}, \textsc{Mimic} produces a matching of size $\hat{n} - \frac{n}{2} L_1(p,q)$ in this case. Furthermore,
by Lemma~\ref{lemma:succeeds-rate}, the probability that the sample size $s_1+s_2$ does not exceed the sample size limit
$s_{n,\varepsilon,\delta'} \cdot \left( 1 + \sqrt{\log(n + 1)} \right)$
and $|\hat{L}_1(p,q) - L_1(p,q)|\le\varepsilon$ is at least $1-\delta$ with 
$\delta=\delta_{poi}+\delta'=\mathcal{O} \left( \frac{1}{\poly(n+1)} \right)+\frac{1}{\log \log \log n} =o(1)$, where $\delta_{poi}$ denotes the probability bound obtained in Lemma \ref{lemma:probability-sim-poisson} and $\delta'$ follows the definition in Lemmas~\ref{lemma:whp} and~\ref{lemma:probability-sim-poisson}.
Thus, the expected
size of the computed matching is at least $(1-\delta)\cdot \left(\hat{n} - \frac{n}{2} L_1(p,q)\right) = \hat{n} - \frac{n}{2} L_1(p,q) - o(\hat{n})\,$.
As the optimal matching size is at most $\hat{n} + \frac{n}{2} L_1(p,q)$ by Lemma~\ref{lemma:estimated-opt},
the competitive ratio is at least $\frac{\hat{n} - \frac{n}{2} L_1(p,q) - o(\hat{n})}{\hat{n} + \frac{n}{2} L_1(p,q)} = 1 - \frac{n L_1(p,q)}{\hat{n} + \frac{n}{2}L_1(p,q)}  - o(1) \geq 1 - \frac{2 L_1(p,q)}{2 \alpha + L_1(p,q)} - o(1)\,,$
where the final inequality holds as $\frac{\hat{n}}{n} \ge \alpha$.
Furthermore, by Lemma~\ref{lemma:set-threshold}, the competitive ratio is also at least $\beta-o(1)$.
In summary, for $L_1(p,q) \leq \tau - 2\varepsilon$, our algorithm obtains a competitive ratio
of $1 - \frac{2 L_1(p,q)}{2 \alpha + L_1(p,q)} - o(1)$, and the competitive ratio is also
at least $\beta - o(1)$.

{\bf Case 2:} $L_1(p,q) \geq \tau$ 
If the algorithm
switches to \textsc{Baseline} after observing $k \le s_{n,\varepsilon,\delta'} \cdot \left( 1 + \sqrt{\log(n + 1)} \right) $
arrivals, then,
by Corollary~\ref{cor:matching-size-switching-baseline}, the expected matching size is
at least $\beta \frac{n-k}{n} \cdot n^* - o(n^*)$.
The probability that the sample size $s_1+s_2$ does not exceed the sample size limit
and the estimate $\hat{L}_1(p,q)$ satisfies $|\hat{L}_1(p,q) - L_1(p,q)|\le\varepsilon$
is at least $1-\delta$.  Thus, the expected matching size is at least
$(1-\delta) \left (\beta \frac{n-k}{n} \cdot n^* - o(n^*)\right)
= \beta \frac{n-k}{n} \cdot n^* - o(n^*)$.

Therefore, the competitive ratio is 
\begin{eqnarray*}
\frac{\left( \beta \cdot \frac{n-k}{n} \cdot n^* \right) - o(n^*)}{n^*} &=& \beta \cdot \frac{n-k}{n} - o(1) \\
&\ge& \beta \cdot \left(1 - \frac{s_{n,\varepsilon,\delta'} \cdot \left( 1 + \sqrt{\log(n + 1)} \right) }{n} \right) - o(1) \\
&=& \beta - o(1)
\end{eqnarray*}

Thus, for $L_1(p,q) \geq \tau$, our algorithm achieves competitive ratio at least $\beta - o(1)$.

{\bf Case 3:} $L_1(p,q) \in (\tau-2\varepsilon, \tau)$

In this case, we show that our algorithm is $(\beta-o(1))$-competitive no matter whether
it runs \textsc{Mimic} or \textsc{Baseline}.
If the algorithm runs \textsc{Mimic} on the whole input, then by Lemma~\ref{lemma:set-threshold}, since $L_1(p,q) < \tau$, the competitive ratio is at least $\beta$.
If the algorithm runs \textsc{Baseline} on the whole input, it is $\beta$-competitive as \textsc{Baseline} is $\beta$-competitive.
If the algorithm switches to \textsc{Baseline} after the sampling phase, we can apply the same justification as in {\bf Case 2}
to show that it achieves competitive ratio at least $\beta - o(1)$.

\end{proof}

\section{Conclusions} \label{sec:conc}
We have studied online bipartite matching with predictions in the random arrival order model.
By extending the algorithm and analysis by Choo et al.~\cite{online-bipartite-matching-imperfect-advice},
we have shown that one can achieve $(1-o(1))$-consistency and $(\beta-o(1))$-robustness
provided that the predicted matching size $\hat{n}$ is at least $\alpha n$ for an arbitrary
constant $\alpha\in (0,1]$, no matter what the size $n^*$ of the optimal matching is.
In contrast, the analysis by Choo et al.~required $n^*=n$ and $\hat{n}\ge \beta n$.
Therefore, it does not establish $(1-o(1))$-consistency when $n^* < n$, even under perfect predictions.
In addition, we have shown that our algorithm has competitive ratio at least
$ 1 - \frac{2 L_1(p,q)}{2 \alpha + L_1(p,q)} - o(1)$, where $L_1(p,q)$ is the
prediction error, thus establishing smoothness.
Regarding future work, it would be interesting to investigate whether the results
can be generalized to the case where the predicted and optimal matching size are sublinear in~$n$.
It seems difficult to adapt the sampling-based approach to this case as an error
of $\varepsilon$ (for constant $\varepsilon$) in the estimate of the $L_1$-distance corresponds to a linear number of mispredicted vertices, which can be larger than the optimal matching size.

\bibliographystyle{plainurl}
\bibliography{references}

\newpage
\appendix
\section{Notation}
\label{app-sec:notation}
The following is a list of the notation and some definitions used in the paper for easy reference.

\begin{itemize}
    \item $G = (U \cup V,E)$ - $G$ is the actual graph, where $U$ and $V$ are the sets of $n$ offline and $n$ online vertices, respectively.
    \item Type of online vertex $v \in V$ -- the set of nodes $\lbrace u \in U \mid \lbrace u,v \rbrace \in E \rbrace$.
    \item $c^*$ - function that maps each type $t$ to the number of type $t$ vertices in $V$.
    \item $c^*(t)$ - number of vertices in $V$ of type $t$.
    \item $T^*$ - set of types $t$ with $c^*(t) >0$.
    \item $r^*$ - size of $T^*$, i.e., $|T^*|$.
    \item $\hat{c}$ - function that maps each type $t$ to the number of online vertices of type $t$ predicted to arrive.
    \item $\hat{G} = (U \cup \hat{V},\hat{E})$ - Graph $\hat{G}$ is determined by the type prediction $\hat{c}$,  where $\hat{V}$ contains exactly $\hat{c}(t)$ vertices of type $t$.
    \item $\hat{c}(t)$ - number of vertices in $\hat{V}$ of type $t$ predicted to arrive.
    \item $\hat{T}$ - set of types $t$ with $\hat{c}(t) >0$.
    \item $\hat{r}$ - size of $\hat{T}$, i.e., $|\hat{T}|$.
    \item $L_1(c^*, \hat{c})$ - the $L_1$-distance between $c^*$ and $\hat{c}$.
    \item $p$ - actual input distribution of types $p = c*/n$.
    \item $q$ - predicted probability distribution of types $q = \hat{c}/n$.
    \item $L_1(p,q)$ - the $L_1$-distance between $p$ and $q$.
    \item $n$ - size of $U$, also size of $V$.
    \item $n^*$ - size of optimal matching in $G$. 
    \item $\hat{n}$ - size of the optimal matching in $\hat{G}$.
    \item $\beta$ -  the best-known competitive ratio for online bipartite matching in the random arrival order model without predictions. Currently $\beta=0.696$.
    \item $\tau$ - $\tau=\frac{2\hat{n}}{n}\cdot \frac{1 - \beta}{1+\beta}$
    \item $s_{n,\varepsilon,\delta'}$ - expected sample size based on $3$ parameters $n, \varepsilon,\delta'$. \\ $s_{n,\varepsilon,\delta'} = \Theta \left(\frac{(n+1) \log(1/\delta')}{\varepsilon^2 \log(n+1)}\right)$.
    \item $s_1 + s_2$ - actual sample size. $s_1,s_2$ drawn independently from a Poisson distribution with parameter $\frac12 s_{n,\varepsilon,\delta'}$.
    \item Sample size limit -- the value $s_{n,\varepsilon,\delta'}(1+\sqrt{\log(n+1)})$.
    \item $T^s_p$ - sample of $s_1 + s_2$ input vertices with replacement.
    \item $\hat{L}_1(p,q)$ - estimate of $L_1(p,q)$ based on the sample $T^s_p$ \\ such that $|\hat{L}_1(p,q) - L_1(p,q)| \leq \varepsilon$ with probability $1-\delta'$.
\end{itemize}

\end{document}